\documentclass{article}





\usepackage[final]{neurips_2020}

\usepackage[utf8]{inputenc} 
\usepackage[T1]{fontenc}    
\usepackage{hyperref}       
\usepackage{url}            
\usepackage{booktabs}       
\usepackage{amsfonts}       
\usepackage{nicefrac}       
\usepackage{microtype}      
\usepackage{graphicx}
\usepackage{subfigure}
\usepackage{multirow}
\usepackage{xcolor}
\definecolor{darkgreen}{rgb}{0,0.5,0}
\usepackage{hyperref}

\usepackage{natbib}
\usepackage{amssymb,amsmath,amsthm}
\newtheorem{theorem}{Theorem}
\graphicspath{ {./figs/} }
\usepackage{booktabs} 
\usepackage[capitalize]{cleveref}

\usepackage{wrapfig}
\usepackage{selectp}
\usepackage{algorithm,algorithmicx,algpseudocode}
\newcommand{\Tau}{\mathrm{T}}

\title{A Self-Tuning Actor-Critic Algorithm}

\author{Tom Zahavy, Zhongwen Xu, Vivek Veeriah, Matteo Hessel, Junhyuk Oh,\\ \textbf{Hado van Hasselt, David Silver and Satinder Singh}\\
\textbf{Deepmind}\\
\{tomzahavy,zhongwen,vveeriah,mtthss,junhyuk,hado,davidsilver,baveja\}@google.com}
\begin{document}

\maketitle

\begin{abstract}
    Reinforcement learning algorithms are highly sensitive to the choice of hyperparameters, typically requiring significant manual effort to identify hyperparameters that perform well on a new domain. In this paper, we take a step towards addressing this issue by using metagradients to automatically adapt hyperparameters online by meta-gradient descent (Xu et al., 2018). We apply our algorithm, Self-Tuning Actor-Critic (STAC), to self-tune all the differentiable hyperparameters of an actor-critic loss function, to discover auxiliary tasks, and to improve off-policy learning using a novel leaky V-trace operator. STAC is simple to use, sample efficient and does not require a significant increase in compute. Ablative studies show that the overall performance of STAC improved as we adapt more hyperparameters. When applied to the Arcade Learning Environment (Bellemare et al. 2012), STAC improved the median human normalized score in $200$M steps from $243\%$ to $364\%$. When applied to the DM Control suite (Tassa et al., 2018), STAC improved the mean score in $30$M steps from $217$ to $389$ when learning with features, from $108$ to $202$ when learning from pixels, and from $195$ to $295$ in the Real-World Reinforcement Learning Challenge (Dulac-Arnold et al., 2020).
\end{abstract}

\section{Introduction}
Deep Reinforcement Learning (RL) algorithms often have many modules and loss functions with many hyperparameters. When applied to a new domain, these hyperparameters are searched via cross-validation, random search \citep{RSHPO}, or population-based training \citep{jaderberg2017population}, which requires extensive computing resources. Meta-learning approaches in RL (e.g., MAML, \citet{finn2017model}) focus on learning good initialization via multi-task learning and transfer. However, many of the hyperparameters must be adapted during the agent's lifetime to achieve good performance (learning rate scheduling, exploration annealing, etc.). This motivates a significant body of work on specific solutions to tune specific hyperparameters, within a single agent \textit{lifetime} \citep{schaul2019adapting, mann2016adaptive, white2016greedy,rowland2019adaptive,sutton1992adapting}. 

Metagradients, on the other hand, provide a general and compute-efficient approach for self-tuning in a single lifetime. The general concept is to represent the training loss as a function of both the agent parameters and the hyperparameters. The agent optimizes the parameters to minimize this loss function, w.r.t the current hyperparameters. The hyperparameters are then self-tuned via backpropagation to minimize a fixed loss function. This approach has been used to learn the discount factor or the $\lambda$ coefficient  \citep{xu2018meta}, to discover intrinsic rewards \citep{zheng2018learning} and auxiliary tasks \citep{veeriah2019discovery}. Finally, we note that there also exist derivative-free approaches for self-tuning hyper parameters \citep{NIPS2019_8710, tang2020online}. 


This paper makes the following contributions. \textbf{First,} we introduce two novel ideas that extend IMPALA~\citep{espeholt2018impala} with additional components. \textbf{(1)} The first agent, referred to as a Self-Tuning Actor-Critic (STAC), self-tunes all the differentiable hyperparameters in the IMPALA loss function. In addition, STAC introduces a \textit{leaky V-trace} operator that mixes importance sampling (IS) weights with truncated IS weights. The mixing coefficient in leaky V-trace is differentiable (unlike the original V-trace) but similarly balances the variance-contraction trade-off in off-policy learning. \textbf{(2)} The second agent, \textit{STACX} (STAC with auXiliary tasks), adds auxiliary parametric actor-critic loss functions to the loss function and self-tunes their metaparameters. STACX self-tunes the discount factors of these auxiliary losses to different values than those of the main task, helping it to reason about multiple horizons. 

\textbf{Second,} we demonstrate empirically that self-tuning consistently improves performance. When applied to the Arcade Learning Environment \citep[ALE]{bellemare2013arcade}, STAC improved the median human normalized score in $200$M steps from $243\%$ to $364\%$. When applied to the DM Control suite \citep{tassa2018deepmind}, STAC improved the mean score in $30$M steps from $217$ to $389$ when learning with features, from $108$ to $202$ when learning from pixels, and from $195$ to $295$ in the Real-World Reinforcement Learning Challenge \citep{dulac2020empirical}. 

We conduct extensive ablation studies, showing that the performance of STACX consistently improves as it self-tunes more hyperparameters; and that STACX improves the baseline when self-tuning different subsets of the metaparameters. STACX performs considerably better than previous metagradient algorithms \citep{xu2018meta,veeriah2019discovery} and across a broader range of environments.

\textbf{Finally,} we investigate the properties of STACX via a set of experiments. (1) We show that STACX is more robust to its hyperparameters than the IMPALA baseline. (2) We visualize the self-tuned metaparameters through training and identify trends. (3) We demonstrate a tenfold scale up in the number of self-tuned hyperparameters -- 21 compared to two in \citep{xu2018meta}. This is the most significant number of hyperparameters tuned by meta-learning at scale and does not require a significant increase in compute (see \cref{table:runtime} in the supplementary and the discussion that follows it).

\section{Background}
\label{sec:metagrad}

We begin with a brief introduction to actor-critic algorithms and IMPALA \citep{espeholt2018impala}. Actor-critic agents maintain a policy $\pi_\theta(a|x)$ and a value function $V_\theta(x)$ that are parameterized with parameters $\theta.$ These policy and the value function are trained via an actor-critic update rule, with a policy gradient loss and a value prediction loss. In IMPALA, we additionally add an entropy regularization loss. The update is represented as the gradient of the following pseudo-loss function
\begin{align}
\label{eq:vtrace_loss}
    L&_{\text{Value}}(\theta)  =  \sum\nolimits_{s \in \Tau}\left(v_s - V_\theta(x_s)\right)^2  \nonumber \\
    L&_{\text{Policy}}(\theta) = -  \sum\nolimits_{s \in \Tau} \rho_s
     \log \pi_\theta (a_s|x_s) (r_s+\gamma v_{s+1} - V_\theta(x_s)) \nonumber \\
    L&_{\text{Entropy}}(\theta) = - \sum\nolimits_{s \in \Tau}
     \sum\nolimits_a \pi_\theta (a|x_s) \log \pi_\theta (a|x_s) \nonumber \\
     L& (\theta)  =  g_v L_{\text{Value}}(\theta)
    + g_p L_{\text{Policy}}(\theta) + 
    g_e L_{\text{Entropy}}(\theta).
\end{align}

In each iteration $t,$ the gradients of these losses are computed on data $\Tau$ that is composed from a mini batch of $m$ trajectories, each of size $n$ (see the the supplementary material for more details). We refer to the policy that generates this data as the behaviour policy $\mu(a_s|x_s),$ where the superscript $s$ will refer to the time index within a trajectory. In the on policy case, $\mu(a_s|x_s) = \pi(a_s|x_s)$, $\rho_s = 1$ , and we have that $v_s$ is the n-steps bootstrapped return $v_s = \sum_{j=s}^{s+n-1} \gamma ^{j-s}r_j + \gamma ^n V(x_{s+n}).$  

IMPALA uses a distributed actor critic architecture, that assigns copies of the policy parameters to multiple actors in different machines to achieve higher sample throughput. As a result, the target policy $\pi$ on the learner machine can be several updates ahead of the actor’s policy $\mu$ that generated the data used in an update. Such off policy discrepancy can lead to biased updates, requiring us to multiply the updates with importance sampling (IS) weights for stable learning. Specifically, IMPALA \citep{espeholt2018impala} uses truncated IS weights to balance the variance-contraction trade-off on these off-policy updates. This corresponds to instantiating \cref{eq:vtrace_loss} with 
\begin{align}
    \label{eq:vtrace}
    v_s &= V(x_s)+ \sum\nolimits_{j=s}^{s+n-1} \gamma ^{j-s} \left(\Pi ^{j-1}_{i=s}c_i \right)\delta_j V, 
    \enspace \delta _j V = \rho_j (r_j + \gamma V(x_{j+1}) - V(x_j)) \nonumber \\
    \rho _j  &= \min \left(\bar \rho, \frac{\pi(a_j|x_j)}{\mu(a_j|x_j)} \right), c _i = \lambda \min \left(\bar c, \frac{\pi(a_i|x_i)}{\mu(a_i|x_i)}\right).
\end{align}

\textbf{Metagradients.}
In the following, we  consider three types of parameters: $\theta$ -- the agent parameters; $\zeta$ -- the hyperparameters; $\eta \subset \zeta$ -- the metaparameters. 
$\theta$ denotes the parameters of the \textbf{agent} and parameterizes, for example, the value function and the policy; these parameters are randomly initialized at the beginning of an agent's lifetime and updated using backpropagation on a suitable \textit{inner} loss function. $\zeta$ denotes the \textbf{hyperparameters}, including, for example, the parameters of the optimizer (e.g., the learning rate) or the parameters of the loss function (e.g., the discount factor); these may be tuned throughout many lifetimes (for instance, via random search) to optimize an \textit{outer} (validation) loss function. Typical deep RL algorithms consider only these first two types of parameters. In metagradient algorithms a third set of parameters is specified: the \textbf{metaparameters}, denoted $\eta$, which are a \emph{subset} of the differentiable parameters in $\zeta.$ Starting from some initial value (itself a hyperparameter), they are then self-tuned during training within a single lifetime.

Metagradients are a general framework for adapting, online, within a single lifetime, the differentiable hyperparameters $\eta$. Consider an inner loss that is a function of both the parameters $\theta$ and the metaparameters $\eta$: $L_{\text{inner}}(\theta;\eta)$. On each step of an inner loop, $\theta$ can be optimized with a fixed $\eta$ to minimize the inner loss $L_{\text{inner}}(\theta;\eta),$ by updating $\theta$ with the following gradient $\tilde \theta(\eta_t) \doteq \theta_{t+1} = \theta_t - \nabla_\theta L_{\text{inner}}(\theta_t;\eta_t).$

In an outer loop, $\eta$ can then be optimized to minimize the outer loss by taking a metagradient step. As $\tilde \theta (\eta)$ is a function of $\eta$ this corresponds to updating the $\eta$ parameters by differentiating the outer loss w.r.t $\eta,$ such that $\eta_{t+1} = \eta_t - \nabla_\eta L_{\text{outer}}(\tilde \theta(\eta_t)).$
The algorithm is general and can be applied, in principle, to any differentiable meta-parameter $\eta$ used by the inner loss. Explicit instantiations of the metagradient RL framework require the specification of the inner and outer loss functions. 


\section{Self-Tuning actor-critic agents}
We now describe the inner and outer loss of our agent. The general idea is to self-tune all the differentiable hyperparameters. The outer loss is the original IMPALA loss (\cref{eq:vtrace_loss}) with an additional Kullback–Leibler (KL) term, which regularizes the $\eta$-update not to change the policy:  

\begin{align}
\label{eq:implala_outer_loss}
     L_{\text{outer}}(\tilde \theta (\eta))
     = g_v^{\text{outer}} L_{\text{Value}}(\theta)
     + g_p^{\text{outer}}  L_{\text{Policy}}(\theta) 
     + g_e^{\text{outer}} L_{\text{Entropy}}(\theta)
     + g^{\text{outer}}_{\text{kl}} \text{KL} (\pi_{\tilde \theta(\eta)}, \pi_{\theta}).
\end{align}

The inner loss function, is parametrized by the metaparameters $\eta = \{ \gamma, \lambda, g_v, g_p, g_e \}$:

\begin{align}
\label{eq:inner_loss}
    L(\theta; \eta)  =  g_v L_{\text{Value}}(\theta)
    + g_p L_{\text{Policy}}(\theta) + 
    g_e L_{\text{Entropy}}(\theta),
\end{align}

Notice that $\gamma$ and $\lambda$ affect the inner loss through the definition of $v_s$ in \cref{eq:vtrace}. The loss coefficients $g_v, g_p, g_e$ allow for loss specific learning rates and support dynamically balancing exploration with exploitation by adapting the entropy loss weight. We apply a sigmoid activation on all the metaparameters, which ensures that they remain bounded. We also multiply the loss coefficients ($g_v, g_e, g_p$) by the respective coefficient in the outer loss to guarantee that they are initialized from the same values. For example, $\gamma = \sigma(\gamma), g_v = \sigma(g_v) g_v ^{\text{outer}}$. The exact details can be found in the supplementary (\cref{alg:stac}, line 11).

\begin{wraptable}{r}{0.5\textwidth}
\begin{center}
    \resizebox{0.455\textwidth}{!}{
    \begin{tabular}[t]{|l|l|l|}
    \hline
        & IMPALA & STAC \\ \hline
        $\theta$  & $V_\theta, \pi_\theta $ &  $V_\theta, \pi_\theta $ \\ \hline
        \multirow{3}{*}{$\zeta$}
        & \multirow{3}{*}{$\{ \gamma, \lambda, g_v, g_p, g_e \}$}
        & $\{ \gamma^{\text{outer}}, \lambda^{\text{outer}}, g^{\text{outer}}_v, g^{\text{outer}}_p, g^{\text{outer}}_e \}$ \\ 
        && Initialisations\\ 
        && Meta optimizer parameters, $g^{\text{outer}}_{\text{kl}}$ \\ \hline
        $\eta$    & $\text{--}$ & $ \{ \gamma, \lambda, g_v, g_p, g_e \}$ \\ \hline
    \end{tabular}}
    \end{center}
\caption{Parameters in IMPALA and STAC.}
\label{table:params}
\end{wraptable}

\cref{table:params} summarizes all the \textbf{hyperparameters} that are required for STAC. STAC has new hyperparameters (compared to IMPALA), but we found that using simple ``rules of thumb'' is sufficient to tune them. These include the initializations of the metaparameters, the hyperparameters of the outer loss, and meta optimizer parameters. The exact values can be found in the supplementary  (see \cref{table:hyperparameters}. For the outer loss hyperparameters, we use exactly the same hyperparameters that were used in the IMPALA paper for all of our agents ($g_v^{\text{outer}}=0.25, g_p^{\text{outer}}=1, g_v^{\text{outer}}=1, \lambda^{\text{outer}}=1$), with one exception: we use $\gamma = 0.995$ as it was found in \citep{xu2018meta} to improve in Atari the performance of IMPALA and the metagradient agent in Atari, and $\gamma = 0.99$ in DM control suite.

For the initializations of the metaparameters we use the corresponding parameters in the outer loss, i.e., for any metaparameter $\eta_i,$ we set $\eta_i^{\text{Init}} = 4.6$ such that $\sigma(\eta_i^{\text{Init}}) = 0.99.$
This guarantees that the inner loss is initialized to be (almost) the same as the outer loss. The exact value was chosen arbitrarily, and we later show in \cref{fig:rob_init} that the algorithm is not sensitive to it. For the meta optimizer, we use ADAM with default settings (e.g., learning rate is set to $10^{-3}$), and for the the KL coefficient, we use $g^{\text{outer}}_{\text{kl}}=1$). 

\subsection{STAC and leaky V-trace} 
The hyperparameters that we considered for self-tuning so far, $\eta = \{ \gamma, \lambda, g_v, g_p, g_e \},$ parametrized the loss function in a differentiable manner. The truncation levels in the V-trace operator, on the other hand, are nondifferentiable. We now introduce the Self-Tuning Actor-Critic (STAC) agent. STAC self-tunes a variant of the V-trace operator that we call leaky V-trace (in addition to the previous five meta parameters), motivated by the study of nonlinear activations in Deep Learning \citep{xu2015empirical}. Leaky V-trace uses a leaky rectifier \citep{maas2013rectifier} to truncate the importance sampling weights, where a differentiable parameter controls the leakiness. Moreover, it provides smoother gradients and prevents the unit from getting saturated. 

Before we introduce Leaky V-trace, let us first recall how the off-policy trade-offs are represented in V-trace using the coefficients $\bar \rho, \bar c$. The weight $\rho_t = \min (\bar \rho, \frac{\pi(a_t|x_t)}{\mu(a_t|x_t)})$ appears in the definition of the temporal difference $\delta_tV$ and defines the fixed point of this update rule. The fixed point of this update is the value function $V^{\pi_{\bar \rho}}$ of the policy $\pi_{\bar \rho}$  that is somewhere between the behavior policy $\mu$ and the target policy $\pi$ controlled by the hyperparameter $\bar \rho,$
$$
    \pi_{\bar \rho} = \frac{\min \left(\bar \rho \mu(a|x), \pi(a|x)\right)}{\sum_b \min \left(\bar \rho \mu(b|x), \pi(b|x)\right)}.
$$

The product of the weights $c_s, ..., c_{t-1}$ in \cref{eq:vtrace} measures how much a temporal difference $\delta_t V$ observed at time $t$ impacts the update of the value function. The truncation level $\bar c$ is used to control the speed of convergence by trading off the update variance for a larger contraction rate. The variance associated with the update rule is reduced relative to importance-weighted returns by clipping the importance weights. On the other hand, the clipping of the importance weights effectively cuts the traces in the update, resulting in the update placing less weight on later TD errors and worsening the contraction rate of the corresponding operator. Following this interpretation of the off policy coefficients, we propose a variation of V-trace which we call \textit{leaky V-trace} with parameters $\alpha_\rho \ge \alpha_c,$
\begin{align}
    \label{eq:leaky_vtrace}
    \text{IS}_t &= \frac{\pi(a_t|x_t)}{\mu(a_t|x_t)}, \enspace \rho _t =  \alpha_\rho \min \big(\bar \rho, \text{IS}_t\big) + (1- \alpha_\rho)\text{IS}_t, \enspace c_i = \lambda \big( \alpha_c  \min \big(\bar c, \text{IS}_t\big)  +(1-\alpha_c )\text{IS}_t\big), \nonumber \\ v_s &= V(x_s)+ \sum\nolimits_{t=s}^{s+n-1} \gamma ^{t-s} \left(\Pi ^{t-1}_{i=s}c_i \right)\delta_t V, \enspace
    \delta _t V = \rho_t (r_t + \gamma V(x_{t+1})-V(x_t)).
\end{align}

We highlight that for $\alpha_\rho=1, \alpha_c=1,$ Leaky V-trace is exactly equivalent to V-trace, while for $\alpha_\rho=0, \alpha_c=0,$ it is equivalent to canonical importance sampling. For other values, we get a mixture of the truncated and not-truncated importance sampling weights.

\cref{thm:leaky} below suggests that Leaky V-trace is a contraction mapping and that the value function that it will converge to is given by $V^{\pi_{\bar \rho, \alpha_\rho}},$ where 
$$
     \pi_{\bar \rho, \alpha_\rho} = \frac{\alpha_\rho\min \left(\bar \rho \mu(a|x), \pi(a|x)\right) +(1-\alpha_\rho)\pi(a|x)}{\alpha_\rho \sum_b\min \left(\bar \rho \mu(b|x), \pi(b|x)\right)+1-\alpha_\rho},
$$
is a policy that mixes (and then re-normalizes) the target policy with the V-trace policy. 
We provide a formal statement of \cref{thm:leaky}, and detailed proof in the supplementary material (\cref{sec:proof}). 
\begin{theorem}
\label{thm:leaky}
     The leaky V-trace operator defined by \cref{eq:leaky_vtrace} is a contraction operator, and it converges to the value function of the policy defined above. 
\end{theorem}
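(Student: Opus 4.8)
The plan is to transcribe the V-trace contraction argument of Espeholt et al.\ (2018) to the leaky weights $\rho_t, c_i$ of \cref{eq:leaky_vtrace}, and to read the fixed point off from the algebraic form of $\pi_{\bar\rho,\alpha_\rho}$. Write the expected operator as
\[
\mathcal{R}V(x_s) = V(x_s) + \mathbb{E}_\mu\!\left[\sum_{t\ge s}\gamma^{t-s}\Big(\prod_{i=s}^{t-1}c_i\Big)\delta_t V\right],\qquad \delta_t V = \rho_t\bigl(r_t+\gamma V(x_{t+1})-V(x_t)\bigr),
\]
the expectation being over trajectories generated by $\mu$ from $x_s$. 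Two elementary identities carry the argument. (a) Conditioned on $x_t$, $\mathbb{E}_{a\sim\mu}[\rho_t\mid x_t] = \alpha_\rho\sum_a\min(\bar\rho\,\mu(a|x_t),\pi(a|x_t)) + (1-\alpha_\rho) =: \bar\rho_{\alpha_\rho}(x_t)$, which lies in $(0,1]$ because $\sum_a\min(\bar\rho\,\mu,\pi)\le\sum_a\pi=1$ and the $(1-\alpha_\rho)\mathrm{IS}_t$ component integrates to $1-\alpha_\rho$. (b) For any bounded $f$, $\mathbb{E}_{a\sim\mu}[\rho_t\,f(a)\mid x_t] = \bar\rho_{\alpha_\rho}(x_t)\,\mathbb{E}_{a\sim\pi_{\bar\rho,\alpha_\rho}}[f(a)\mid x_t]$, since $\mu(a|x)\rho(a) = \alpha_\rho\min(\bar\rho\,\mu(a|x),\pi(a|x)) + (1-\alpha_\rho)\pi(a|x)$ equals $\bar\rho_{\alpha_\rho}(x)$ times $\pi_{\bar\rho,\alpha_\rho}(a|x)$; this is the computation that pins the fixed point to the mixed policy.

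For the contraction, put $\Delta := V_1 - V_2$, expand $\mathcal{R}V_1(x_s) - \mathcal{R}V_2(x_s)$, and reindex so that the $\gamma\Delta(x_{t+1})$ term from step $t$ is grouped with the $-\Delta(x_{t+1})$ term from step $t+1$. This yields
\[
\mathcal{R}V_1(x_s) - \mathcal{R}V_2(x_s) = \bigl(1-\mathbb{E}_\mu[\rho_s]\bigr)\Delta(x_s) + \mathbb{E}_\mu\!\left[\sum_{t> s}\gamma^{t-s}\Big(\prod_{i=s}^{t-2}c_i\Big)\bigl(\rho_{t-1} - c_{t-1}\rho_t\bigr)\Delta(x_t)\right].
\]
I would then (i) use (a) to get $\mathbb{E}_\mu[\rho_s]=\bar\rho_{\alpha_\rho}(x_s)\le 1$, so the leading coefficient lies in $[0,1)$, and assume a uniform lower bound $\bar\rho_{\alpha_\rho}(\cdot)\ge\beta>0$ (the analogue of $\mathbb{E}_\mu\rho_0\ge\beta$ in V-trace); (ii) show the remaining coefficients are non-negative: inside the expectation the forward weight $\rho_t$ may be replaced by $\mathbb{E}_\mu[\rho_t\mid x_t]=\bar\rho_{\alpha_\rho}(x_t)\le 1$ (the factors $\prod_{i\le t-2}c_i$, $\rho_{t-1}$ and $c_{t-1}$ are determined by the history through $x_t$), which reduces the claim to a statement of the type $c_{t-1}\le\rho_{t-1}$ in conditional mean; (iii) sum all coefficients by a telescoping identity to get total $1-(1-\gamma)\sum_{t\ge s}\gamma^{t-s}\mathbb{E}_\mu[(\prod_{i=s}^{t-1}c_i)\rho_t]$, which is $\le 1-(1-\gamma)\bar\rho_{\alpha_\rho}(x_s)\le 1-(1-\gamma)\beta<1$. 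With all coefficients non-negative and summing to at most $1-(1-\gamma)\beta$, $\lvert\mathcal{R}V_1(x_s)-\mathcal{R}V_2(x_s)\rvert\le (1-(1-\gamma)\beta)\lVert\Delta\rVert_\infty$, uniformly in $x_s$; hence $\mathcal{R}$ is a sup-norm contraction with a unique fixed point.

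To identify the fixed point, apply (b) with $f(a)=r(x_t,a)+\gamma\,\mathbb{E}[V(x_{t+1})\mid x_t,a]-V(x_t)$: this gives $\mathbb{E}_\mu[\delta_t V\mid x_t]=\bar\rho_{\alpha_\rho}(x_t)\bigl(\mathcal{T}^{\pi_{\bar\rho,\alpha_\rho}}V(x_t)-V(x_t)\bigr)$, where $\mathcal{T}^{\pi}$ is the Bellman evaluation operator of $\pi$. Taking $V = V^{\pi_{\bar\rho,\alpha_\rho}}$ annihilates every term $\mathbb{E}_\mu[\delta_t V\mid x_t]$, and since $\prod_{i=s}^{t-1}c_i$ is measurable given the history up to $x_t$, iterated expectations give $\mathcal{R}V^{\pi_{\bar\rho,\alpha_\rho}}=V^{\pi_{\bar\rho,\alpha_\rho}}$; by uniqueness this is the operator's limit.

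The main obstacle is step (ii). In plain V-trace non-negativity is immediate because $c_i\le\rho_i$ pointwise (from $\bar c\le\bar\rho$ and $\lambda\le 1$); with leaky weights the un-truncated component can push $c_i$ above $\rho_i$ pointwise once $\alpha_\rho>\alpha_c$, so the inequality has to be recovered after conditioning, and this is exactly where the hypothesis $\alpha_\rho\ge\alpha_c$ (alongside $\bar\rho\ge\bar c$, $\lambda\le 1$) must be used. Making that conditional non-negativity claim fully rigorous — and stating explicitly which boundedness/overlap condition on $\mu$ and $\pi$ it ultimately needs — is where I expect the real work to be; the remainder is the V-trace bookkeeping carried over unchanged.
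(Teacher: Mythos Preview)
Your proposal tracks the paper's argument almost line for line: the same decomposition of $\mathcal{R}V_1-\mathcal{R}V_2$ with coefficients $\tilde\rho_{t-1}-\tilde c_{t-1}\tilde\rho_t$ (multiplied by the product of earlier $\tilde c$'s), the same telescoping identity for the total weight, and the same reweighting computation (your identity (b)) to identify the fixed point as $V^{\pi_{\bar\rho,\alpha_\rho}}$. Where you diverge is exactly at your step~(ii), and there you are being more cautious than the paper.

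The paper does no conditional-expectation gymnastics at that point: it simply asserts that $\bar\rho\ge\bar c$ together with $\alpha_\rho\ge\alpha_c$ gives $\tilde\rho_t\ge\tilde c_t$ \emph{pointwise}, and then combines this with $\mathbb{E}_\mu[\tilde\rho_t\mid x_t]\le 1$ to obtain $\mathbb{E}_\mu[\tilde\rho_{t-1}-\tilde c_{t-1}\tilde\rho_t]\ge\mathbb{E}_\mu[\tilde c_{t-1}(1-\tilde\rho_t)]\ge 0$. Your instinct that the pointwise inequality can fail is correct, but your reading of how the hypothesis $\alpha_\rho\ge\alpha_c$ should enter is off. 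Writing $g(\alpha,\bar b;x)=\alpha\min(\bar b,x)+(1-\alpha)x=x-\alpha\max(x-\bar b,0)$, one has $\tilde\rho_t=g(\alpha_\rho,\bar\rho;\mathrm{IS}_t)$ and $\tilde c_t/\lambda=g(\alpha_c,\bar c;\mathrm{IS}_t)$, and $g$ is increasing in $\bar b$ but \emph{decreasing} in $\alpha$. So $\bar\rho\ge\bar c$ pushes in the right direction while $\alpha_\rho\ge\alpha_c$ pushes the wrong way; with $\lambda=\bar\rho=\bar c=1$, $\alpha_\rho=1$, $\alpha_c=0$, $\mathrm{IS}_t=10$ one gets $\tilde\rho_t=1<10=\tilde c_t$. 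The same construction makes $\mathbb{E}_\mu[\tilde\rho_t-\tilde c_t]<0$, so conditioning on the history does not repair the step either.

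In the case the algorithm actually uses, $\alpha_\rho=\alpha_c=\alpha$, the pointwise bound $\tilde\rho_t\ge\tilde c_t$ is immediate from $\bar\rho\ge\bar c$ and $\lambda\le 1$, and your step~(ii) is trivial. Your ``main obstacle'' therefore either evaporates (equality case, where your outline and the paper's proof go through verbatim) or is a genuine gap that the paper's own argument also leaves unaddressed (strict case $\alpha_\rho>\alpha_c$); in the former nothing beyond the V-trace bookkeeping you already sketched is required.
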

Similar to $\bar \rho,$ the new parameter $\alpha_\rho$ controls the fixed point of the update rule and defines a value function that interpolates between the value function of the target policy $\pi$ and the behavior policy $\mu.$  Specifically, the parameter $\alpha_c$ allows the importance weights to "leak back" creating the opposite effect to clipping. 
Since \cref{thm:leaky} requires us to have $\alpha_\rho \ge \alpha_c,$ our main STAC implementation parametrises the loss with a single parameter $\alpha=\alpha_\rho=\alpha_c$. In addition, we also experimented with a version of STAC that learns both $\alpha_\rho$ and $\alpha_c$. This variation of STAC learns the rule $\alpha_\rho \ge \alpha_c$ on its own (see \cref{fig:discovery}). Note that low values of $\alpha_c$ lead to importance sampling, which is high contraction but high variance. On the other hand, high values of $\alpha_c$ lead to V-trace, which is lower contraction and lower variance than importance sampling. Thus exposing $\alpha_c$ to meta-learning enables STAC to control the contraction/variance trade-off directly. 
 
In summary, the \textbf{metaparameters} for STAC are $ \{ \gamma, \lambda, g_v, g_p, g_e ,\alpha\}.$ To keep things simple, when using Leaky V-trace we make two simplifications w.r.t the \textbf{hyperparameters}. First, we use V-trace to initialise Leaky V-trace, i.e., we initialise $\alpha=1.$ Second, we fix the outer loss to be V-trace, i.e. we set $\alpha^{\text{outer}}=1.$

\subsection{STAC with auxiliary tasks (STACX)}
Next, we introduce a new agent, that extends STAC with auxiliary policies, value functions, and corresponding auxiliary loss functions. Auxiliary tasks have proven to be an effective solution to learning useful representations from limited amounts of data. We observe that each set of meta-parameters induces a separate inner loss function, which can be thought of as an auxiliary loss. To meta-learn auxiliary tasks, STACX self-tunes additional sets of meta-parameters, independently of the main head, but via the same meta-gradient mechanism. The novelty here comes from STACX's ability to discover the auxiliary tasks most useful to it. E.g., the discount factors of these auxiliary losses allow STACX to reason about multiple horizons.

STACX's architecture has a shared representation layer $\theta_{\text{shared}}$, from which it splits into $n$ different heads (\cref{fig:arch}). For the shared representation layer we use the deep residual net from  \citep{espeholt2018impala}. Each head has a policy and a corresponding value function that are represented using a $2$ layered MLP with parameters $\{ \theta_i \}_{i=1}^n$. Each one of these heads is trained in the inner loop to minimize a loss function $L(\theta_i;\eta_i),$  parametrized by its own set of metaparameters $\eta_i$.

The STACX agent policy is defined as the policy of a specific head ($i=1$). We considered two more variations that allow the other heads to act. Both did not work well, and we provide more details in the supplementary (\cref{sec:hyper}).  The hyperparameters $\{\eta_i \}_{i=1}^n$ are trained in the outer loop to improve the performance of this single head. Thus, the role of the auxiliary heads is to act as auxiliary tasks \citep{jaderberg2016reinforcement} and improve the shared representation $\theta_{\text{shared}}$. Finally, notice that each head has its own policy $\pi^i$, but the behavior policy is fixed to be $\pi^1$. Thus, to optimize the auxiliary heads, we use (Leaky) V-trace for off-policy corrections.

\begin{wrapfigure}{r}{0.5\textwidth}
\includegraphics[width=\linewidth, trim=150 0 100 0, clip]{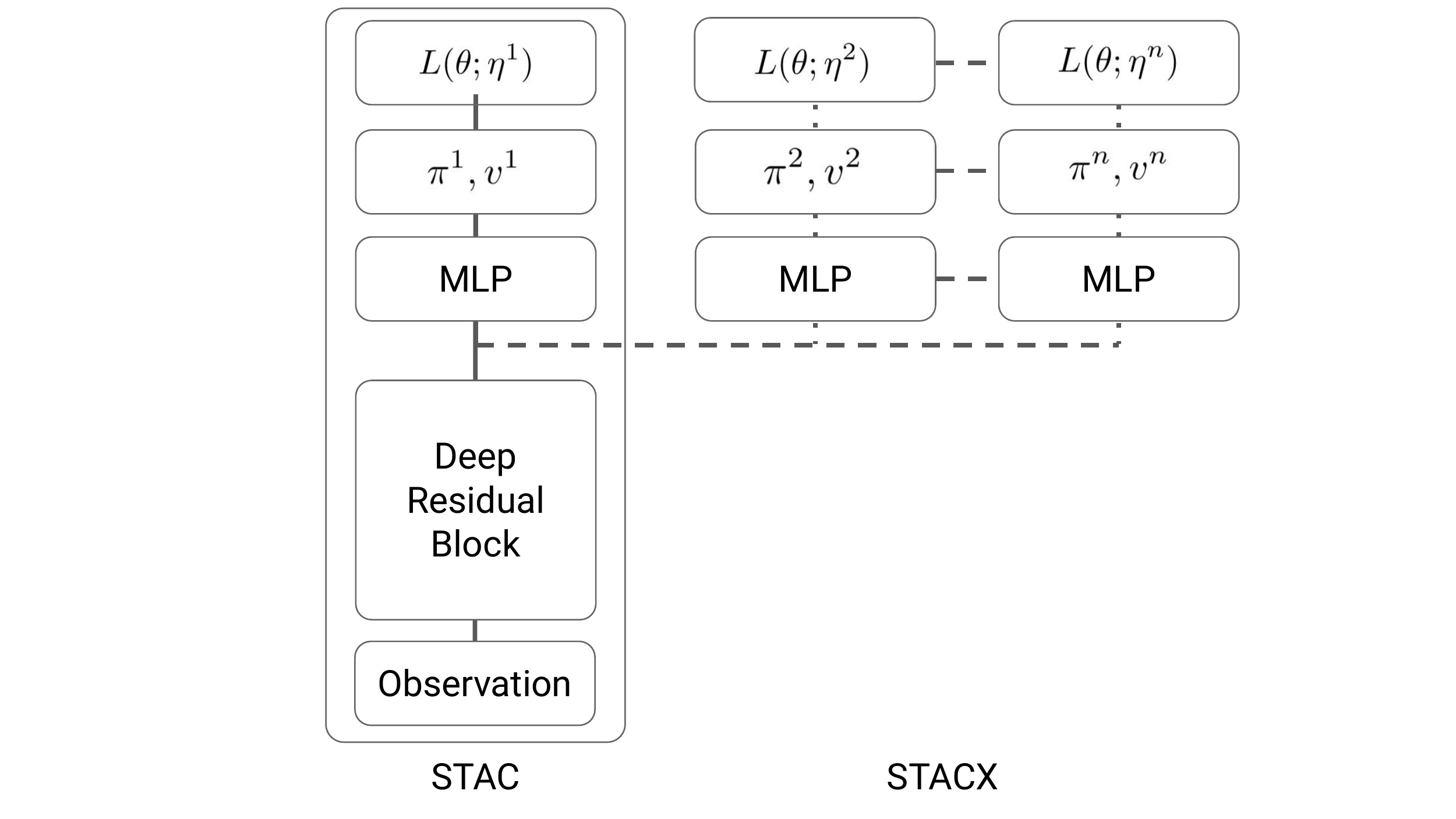}
\label{fig:arch}
\caption{Block diagrams of STAC and STACX.}
\end{wrapfigure} 

The \textbf{metaparameters}  for STACX are $ \{ \gamma^i, \lambda^i, g^i_v, g^i_p, g^i_e ,\alpha^i \}_{i=1}^3.$ Since the outer loss is defined only w.r.t head $1,$ introducing the auxiliary tasks into STACX does not require new hyperparameters for the outer loss. In addition, we use the same initialization values for all the auxiliary tasks. Thus, STACX has the same \textbf{hyperparameters} as STAC.

\textbf{Summary.}
In this Section, we showed how embracing self-tuning via metagradients enables us to introduce novel ideas into our agent. We augmented our agent with a parameterized Leaky V-trace operator and with \emph{self-tuned} auxiliary loss functions. We did not have to tune these new hyperparameters because we relied on metagradients to self-tune them. We emphasize here that STACX is not a fully parameter-free algorithm. Nevertheless, we argue that STACX requires the same hyperparameter tuning as IMPALA, since we use default values for the new hyperparameters. We further evaluated these design principles in \cref{fig:rob}.

\section{Experiments}

\subsection{Atari Experiments.}
We begin the empirical evaluation of our algorithm in the Arcade Learning Environment \citep[ALE]{bellemare2013arcade}. To be consistent with prior work, we use the same ALE setup that was used in \citep{espeholt2018impala,xu2018meta}; in particular, the frames are down-scaled and grayscaled.

\begin{figure}[b]
    \vspace{-0.5cm}
    \centering
    \subfigure[Average learning curves with $0.5 \cdot$ std confidence intervals.]{\label{fig:lcurve}\includegraphics[width=0.6\linewidth ,trim=0 0 0 50, clip]{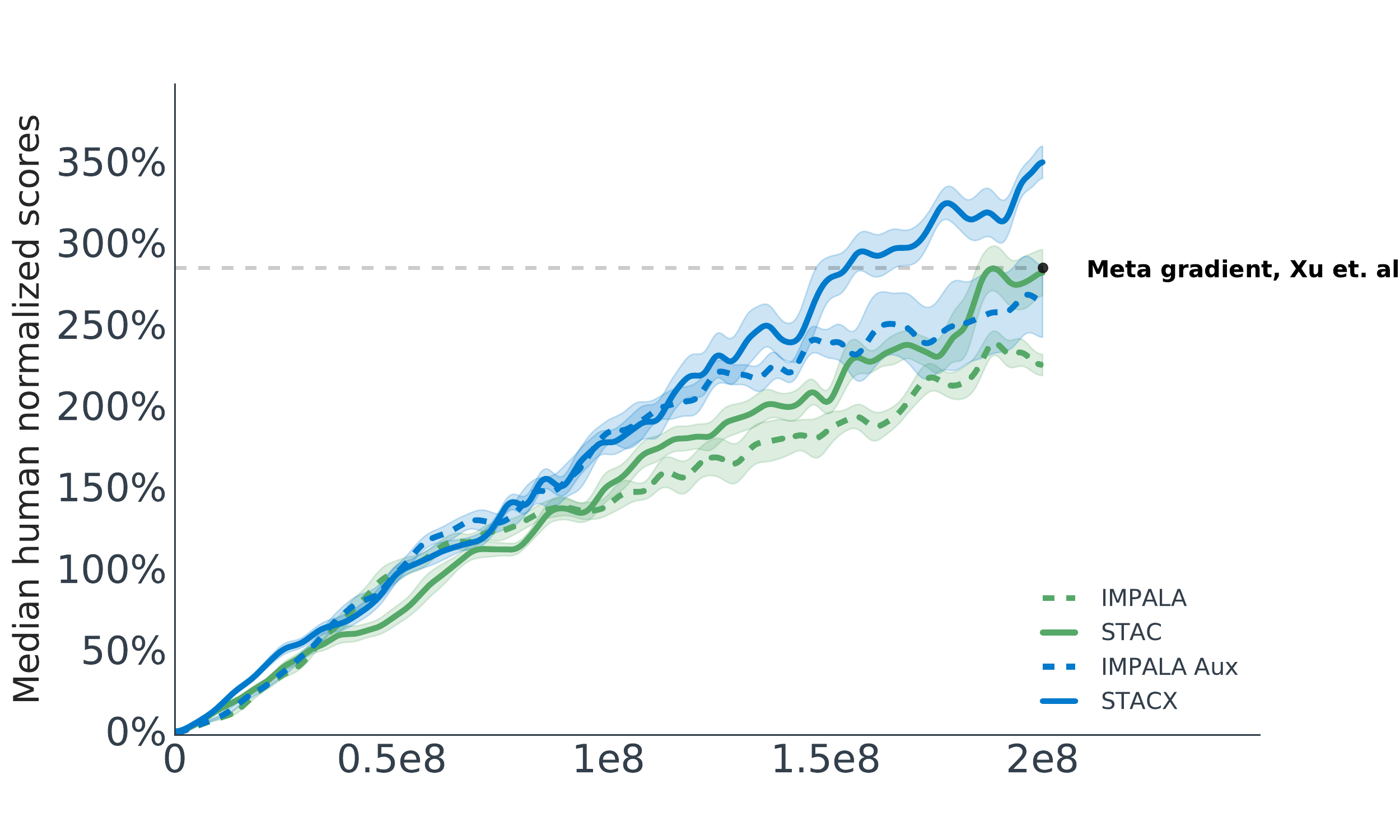}}
    \subfigure[Ablative studies of STAC (green) and STACX (blue) alongside baselines (red).]{\label{fig:generality}\includegraphics[width=0.39\linewidth]{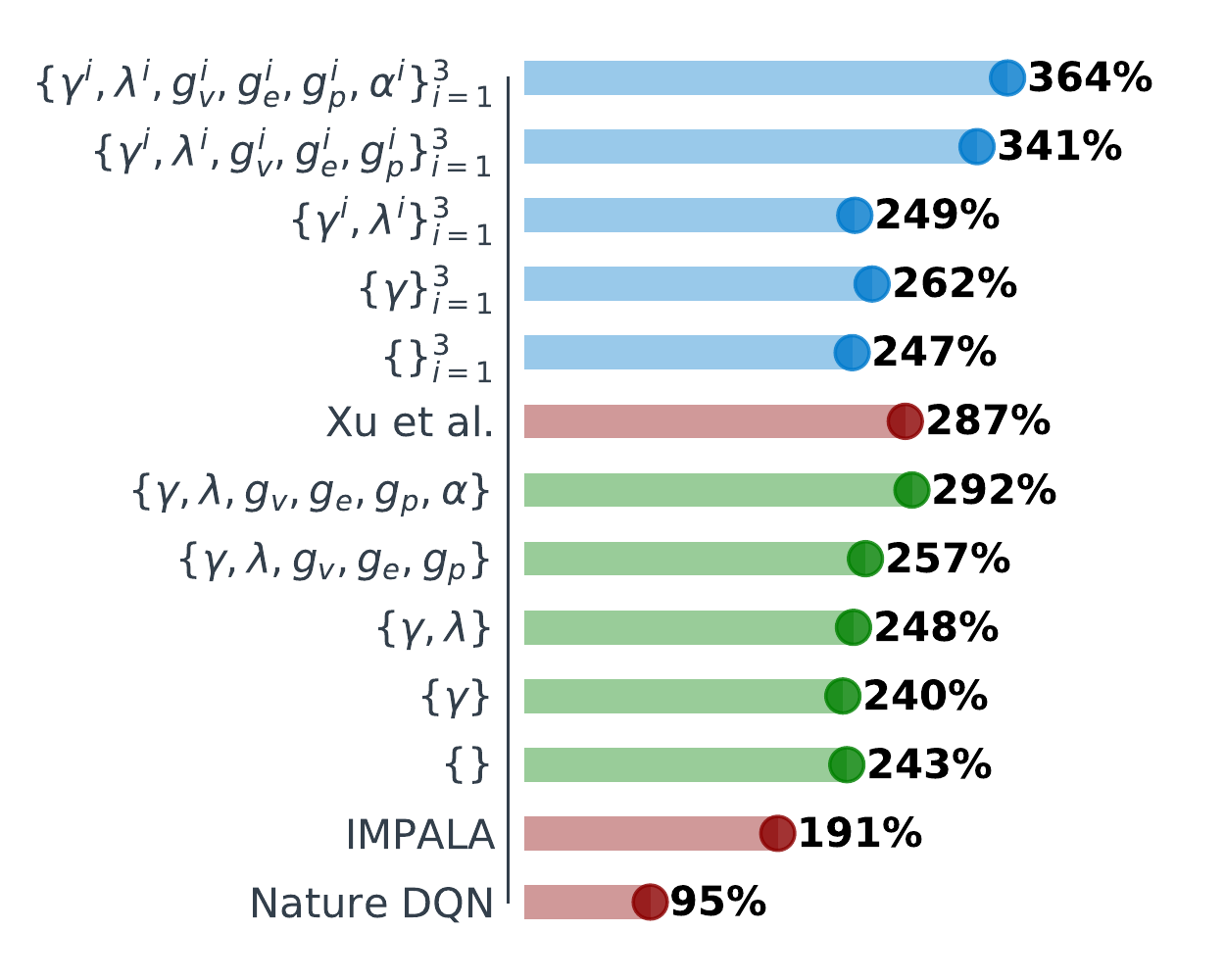}}
    \caption{Median normalized scores in 57 Atari games. Average over three seeds, 200M frames.}
\end{figure}

\cref{fig:lcurve} presents the normalized median scores during training, computed in the following manner: for each Atari game, we compute the human normalized score after 200M frames of training and average this over three seeds. We then report the overall median score over the $57$ Atari domains for four variations of our algorithm STACX (blue, solid), STAC (green, solid), IMPALA with fixed auxiliary tasks (blue, dashed), and IMPALA (green, dashed). Inspecting \cref{fig:lcurve} we observe two trends: using self-tuning improves the performance with/out auxiliary tasks (solid vs. dashed lines), and using auxiliary tasks improves the performance with/out self-tuning (blue vs. green lines). In the supplementary (\cref{sec:rel_atari}), we report the relative improvement over IMPALA in individual games. 

STACX outperforms all other agents in this experiment, achieving a median score of $364\%$, a new state of the art result in the ALE benchmark for training online model-free agents for 200M frames. In fact, there are only two agents that reported better performance after 200M frames: LASER \citep{schmitt2019off} achieved a normalized median score of $431\%$ and MuZero \citep{schrittwieser2019mastering} achieved $731\%$. These papers propose algorithmic modifications that are orthogonal to our approach and can be combined in future work; LASER combines IMPALA with a uniform large-scale experience replay; MuZero uses replay and a tree-based search with a learned model.

In \cref{fig:generality}, we perform an ablative study of our approach by training different variations of STAC (green) and STACX (blue). For each bar, we report the subset of metaparameters that are being self-tuned in this ablative study. The bottom bar for each color with $\left\{ \right\}$ corresponds to not using self-tuning at all (IMPALA w/o auxiliary tasks), and the topmost color corresponds to self-tuning all the metaparameters (as reported in \cref{fig:lcurve}). In between, we report results for tuning only subsets of the metaparameters. For example, $\eta = \{\gamma \}$ corresponds to self-tuning a single loss function where only $\gamma$ is self-tuned. When we do not self-tune a hyperparameter, its value is fixed to its corresponding value in the outer loss. For example, in all the ablative studies besides the two topmost bars, we do not self-tune $\alpha,$ which means that we use V-trace instead (fix $\alpha=1$).
Finally, in red, we report results from different baselines as a point of reference (in this case, IMPALA is using $\gamma=0.99$), and our variation of IMPALA (green, bottom) with $\gamma=0.995$ indeed achieves higher score as was reported in \citep{xu2018meta}. We also note that the metagradient agent of \citet{xu2018meta} achieved higher performance than our variation of STAC that is only learning $\eta = \left\{ \gamma, \lambda\right\}.$ We further discuss this in the supplementary (\cref{sec:rep_xu}). 

Inspecting \cref{fig:generality} we observe that the performance of STAC and STACX consistently improves as they self-tune more metaparameters. These metaparameters control different trade-offs in reinforcement learning: discount factor controls the effective horizon, loss coefficients affect learning rates, the Leaky V-trace coefficient controls the variance-contraction-bias trade-off in off-policy RL.


\subsection{DM control suite}
To further examine the generality of STACX we conducted a set of experiments in the DM control suite \citep{tassa2018deepmind}. We considered three setups: (a) learning from feature observations, (b) learning from pixel observations, and (c) the real-world RL challenge \citep[RWRL]{dulac2020empirical}. The latter introduces a set of challenges (inspired by real-world scenarios) on top of existing control domains: delayed actions, observations and rewards, action repetition, added noise to the actions, stuck/dropped sensors, perturbations, and increased state dimensions. These challenges are combined in 3 difficulty levels (easy, medium, and hard) for humanoid, walker, quadruped, and cartpole. Scores are normalized to $[0,1000]$ by the environment \citep{tassa2018deepmind}.

We use the same algorithm and similar hyperparameters to the ones we use in the Atari experiments. For most of the hyperparameters (and in particular, those that are relevant to STACX) we use the same values as we used in the Atari experiments (e.g., $g_v,g_p,g_e$); others, like learning rate and discount factor, were re-tuned for the control domains (but remain fixed across all three setups). The exact details can be found in the supplementary (\cref{sec:hyper}). 
For continuous actions, our network outputs two variables per action dimension that correspond to the mean and the standard deviation of a squashed Gaussian distribution \citep{haarnoja2018soft}. The squashing refers to applying a tanh activation on the samples of the Gaussian, resulting in bounded actions. In addition, instead of using entropy regularization, we use a KL to standard Gaussian.

We emphasize here that while online actor-critic algorithms (IMPALA, A3C) do not achieve SOTA results in DM control, the results we present for IMPALA in \cref{fig:features} are consistent with the A3C results in \citep{tassa2018deepmind}. The goal of these experiments is to measure the relative improvement from self-tuning. In \cref{fig:dmcontrol}, we average the results across suite domains and across three seeds. Standard deviation error bars w.r.t the seeds are reported in shaded areas. In the supplementary (\cref{sec:dm_control}) we provide domain-specific learning curves. 

Inspecting \cref{fig:dmcontrol} we observe two trends. \textbf{First}, using self-tuning improves performance (solid vs. dashed lines) in all three suites, w/o using the auxiliary tasks. \textbf{Second}, the auxiliary tasks improve performance when learning from pixels (\cref{fig:pixels}), which is consistent with the results in Atari. When learning from features (\cref{fig:features}, \cref{fig:rwrl}), we observe that IMPALA performs better without auxiliary tasks. This is reasonable, as there is less need for strong representation learning in this case. Nevertheless, STACX performs better than IMPALA as it can self-tune the loss coefficients of the auxiliary tasks to low values. Since this takes time, STACX performs worse than STAC. 

Similar to the A3C baseline (using features), all of our agents were not able to solve the more challenging control domains (e.g., humanoid). Nevertheless, by using self-tuning, STAC, and STACX significantly outperformed the IMPALA baselines in many of the control domains. In the RWRL challenge, they even outperform strong baselines like D4PG and DMPO in the average score. Moreover, STAC was able to solve (to achieve an almost perfect score) two RWRL domains (quadruped.easy, cartpole.easy), making a new SOTA in these domains. 

\begin{figure}[h]
    \centering
    \subfigure[Feature observations]{\label{fig:features}\includegraphics[width=0.32\linewidth]{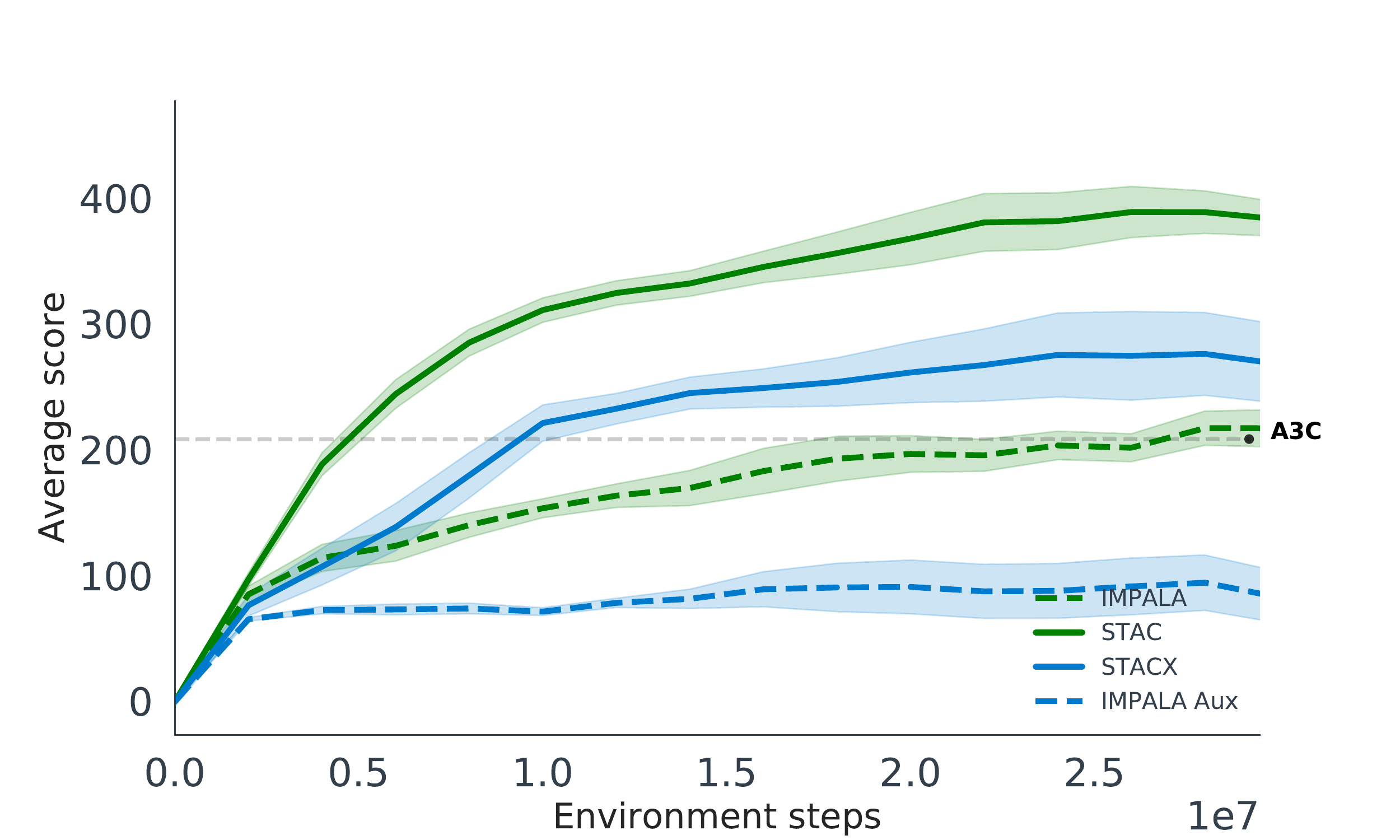}}
    \subfigure[Pixel observations]{\label{fig:pixels}\includegraphics[width=0.32\linewidth]{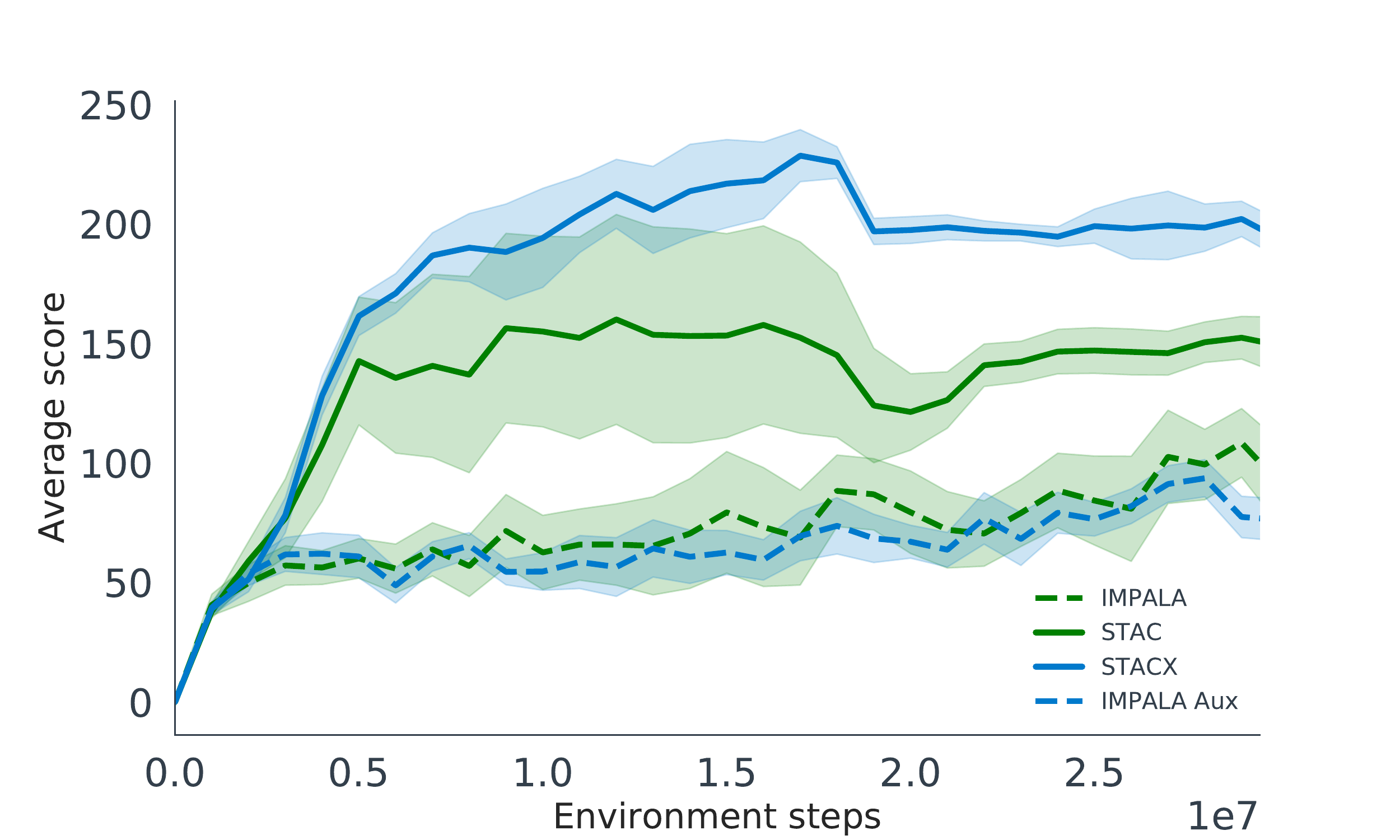}}
    \subfigure[Real world RL]{\label{fig:rwrl}\includegraphics[width=0.32\linewidth]{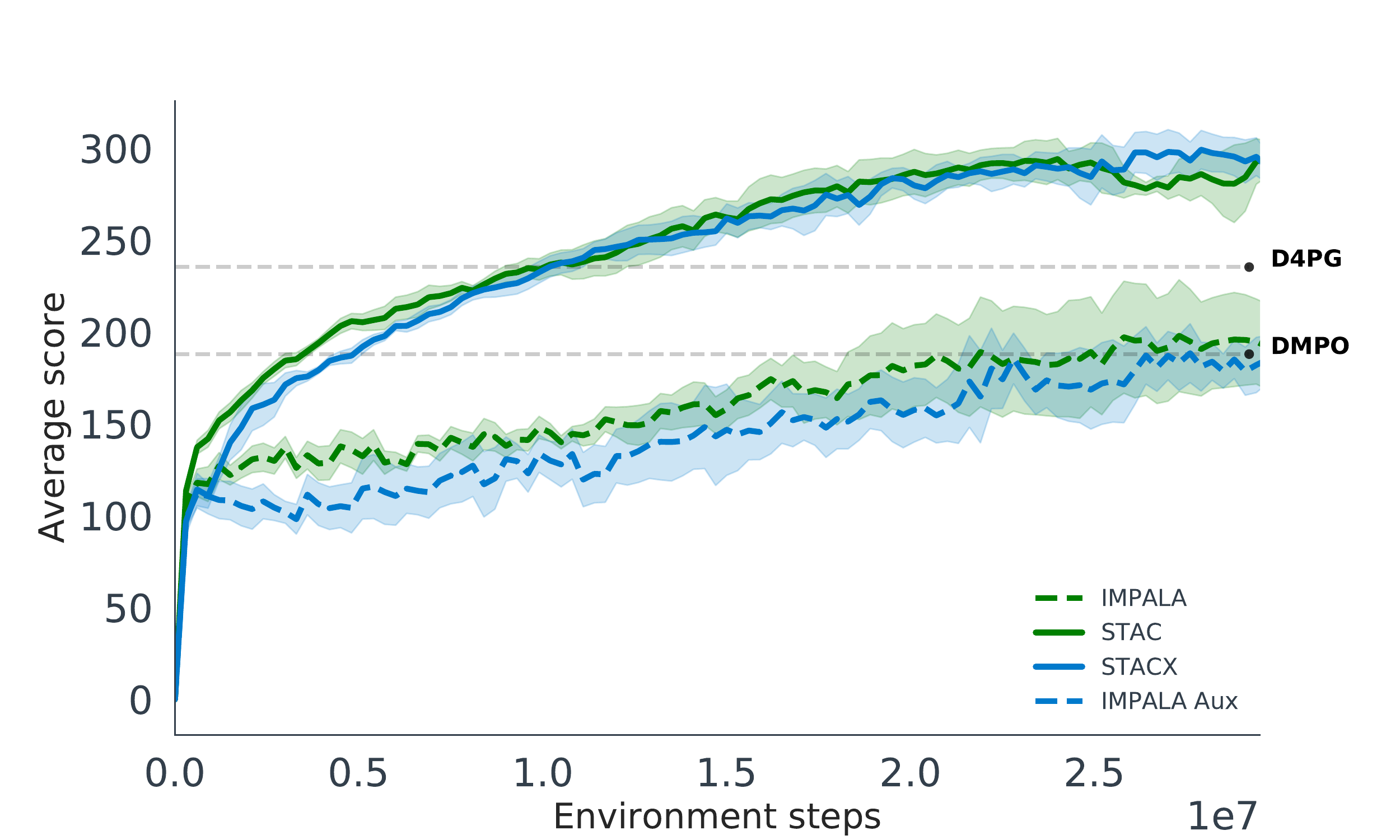}}
    \caption{Aggregated results in DM Control of STACX, STAC and IMPALA with/out Auxiliary tasks. In dashed lines, we report the aggregated results of baselines at the end of training; A3C as reported in \citep{tassa2018deepmind}, DMPO and D4PG as reported in \citep{dulac2020empirical}.}
    \label{fig:dmcontrol}
\end{figure}

\subsection{Analysis}
To better understand the behavior of the proposed method, we chose one domain, Atari, and performed some additional experiments.
We begin by investigating the \textbf{robustness} of STACX to its hyperparameters. First, we consider the hyperparameters of the \textbf{outer loss}, and compare the robustness of STACX with that of IMPALA. For each hyperparameter ($\gamma, g_v$) we select $5$ perturbations. For STACX we perturb the hyperparameter in the outer loss ($\gamma^{\text{outer}}, g^{\text{outer}}_v$) and for IMPALA we perturb the corresponding hyperparameter ($\gamma, g_v$). We randomly selected $5$ Atari games and presented the mean and standard deviation across $3$ random seeds after $200$M frames. 

\cref{fig:rob_gamma} and \cref{fig:rob_baseline} present the results for the discount factor $\gamma$ and for $g_v$ respectively. We can see that overall, STACX performs better than IMPALA (in $72\%$ and $80\%$ of the setups, respectively). This is perhaps not surprising because we have already seen that STACX outperforms IMPALA in Atari, but now we observe this over a wider range of hyperparameters. 
In addition, we can see that in specific games, there are specific hyperparameter values that result in lower performance. In particular, in James Bond and Chopper Command (the two topmost rows), we observe lower performance when decreasing the discount factor $\gamma$ and when decreasing $g_v$. While the performance of both STACX and IMPALA deteriorates in these experiments, the effect on STACX is less severe. 

In \cref{fig:rob_init} we investigate the robustness of STACX to the \textbf{initialization} of the metaparameters. Since IMPALA does not have this hyperparameter, we only investigate its effect on STACX. We selected five different initialization values (all close to $1,$, so the inner loss is close to the outer loss) and fixed all the other hyperparameters (e.g., the outer loss). Inspecting \cref{fig:rob_init}, we can see that the performance of STACX does not change too much when we change the value of the initializations, both in the case where we perturb the initializations of all the meta parameters (top), and only the discount (bottom). These observations confirm that our design choice to arbitrary initializing all the meta parameters to $0.99$ is sufficient, and there is no need to tune this new hyperparameter. 

\begin{figure}[h]
    \centering
    \subfigure[Discount factor.]{\label{fig:rob_gamma}\includegraphics[width=0.34\linewidth]{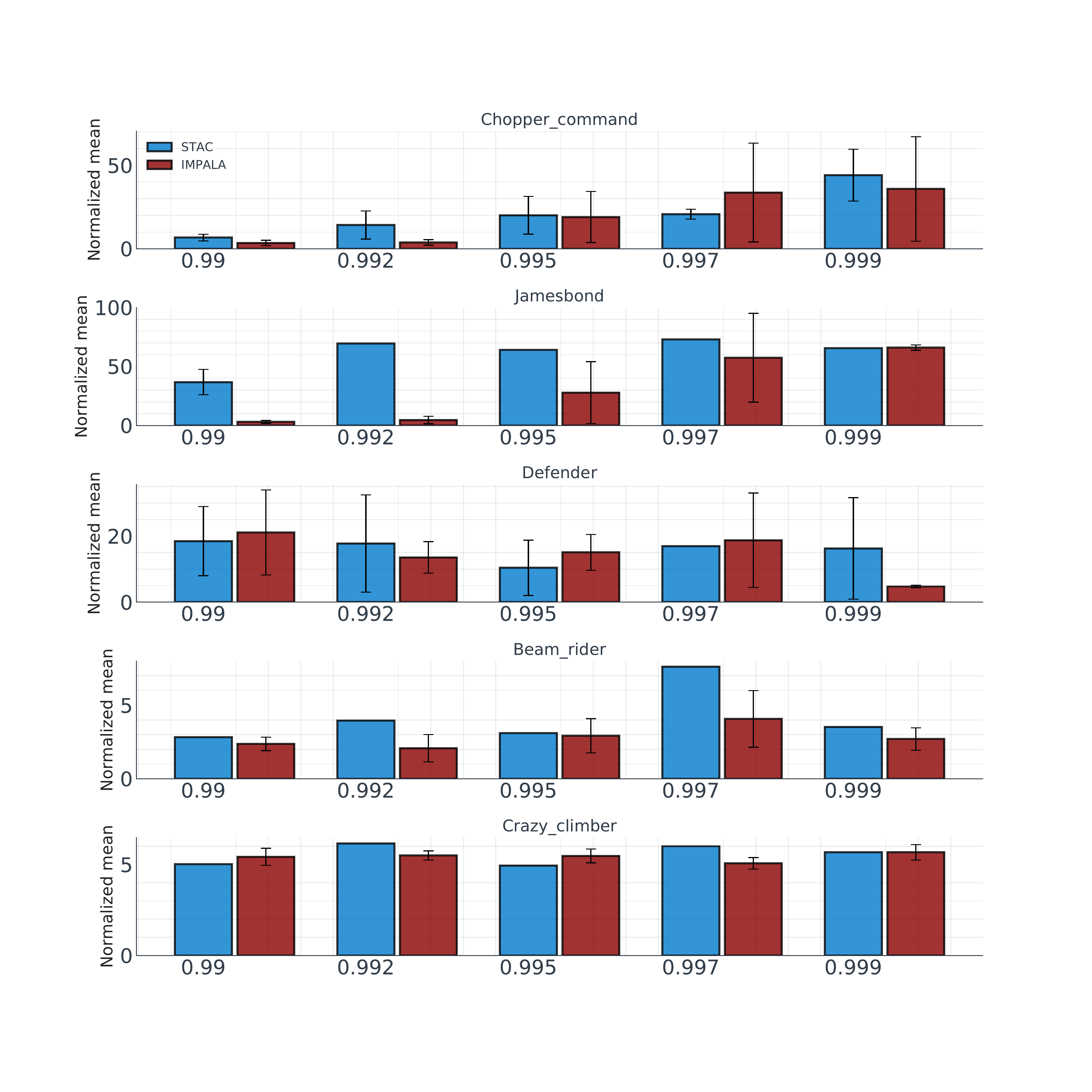}}
    \subfigure[Critic weight $g_v$.]{\label{fig:rob_baseline}\includegraphics[width=0.34\linewidth]{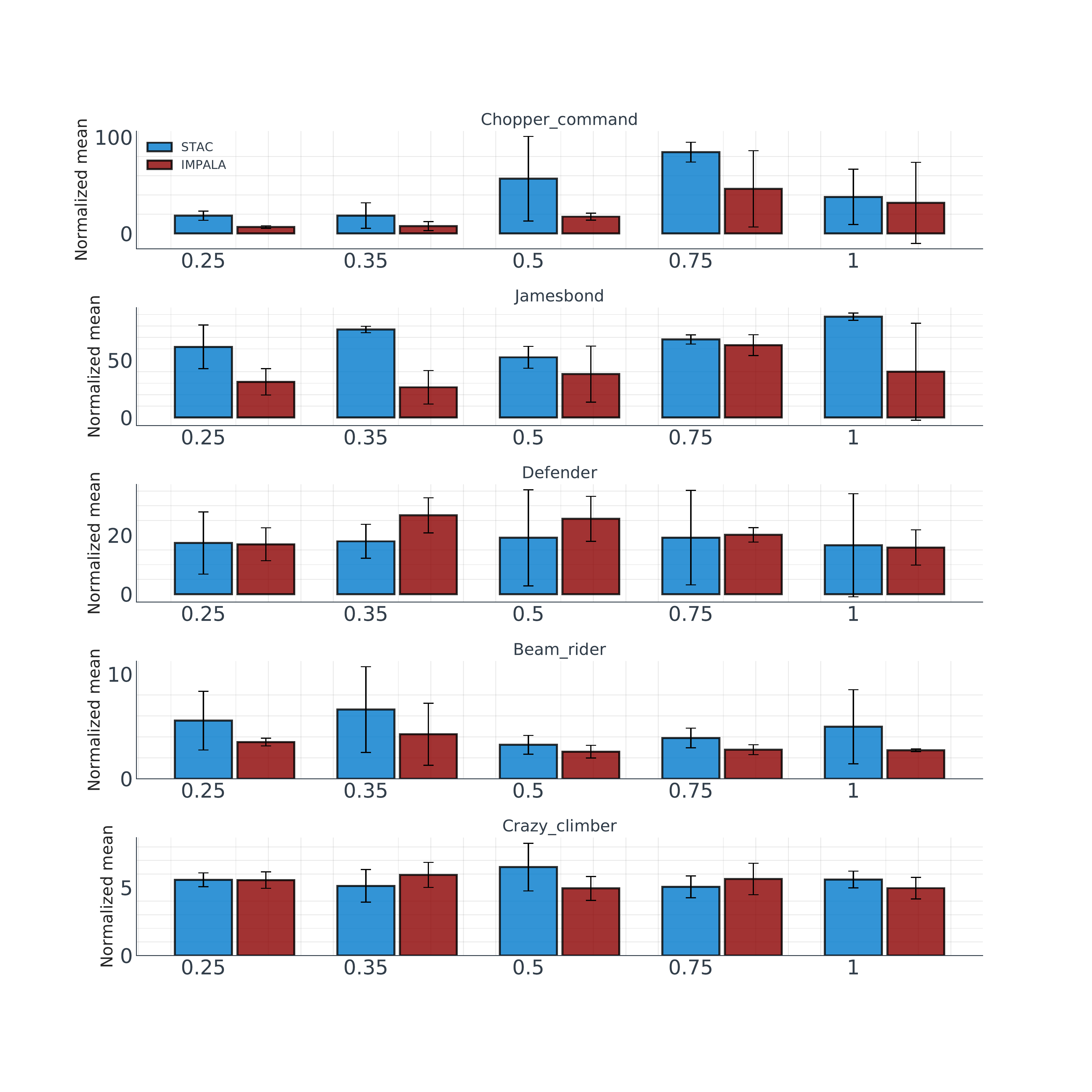}}
    \subfigure[Metaparameters initialisation.]{\label{fig:rob_init}\includegraphics[width=0.3\linewidth]{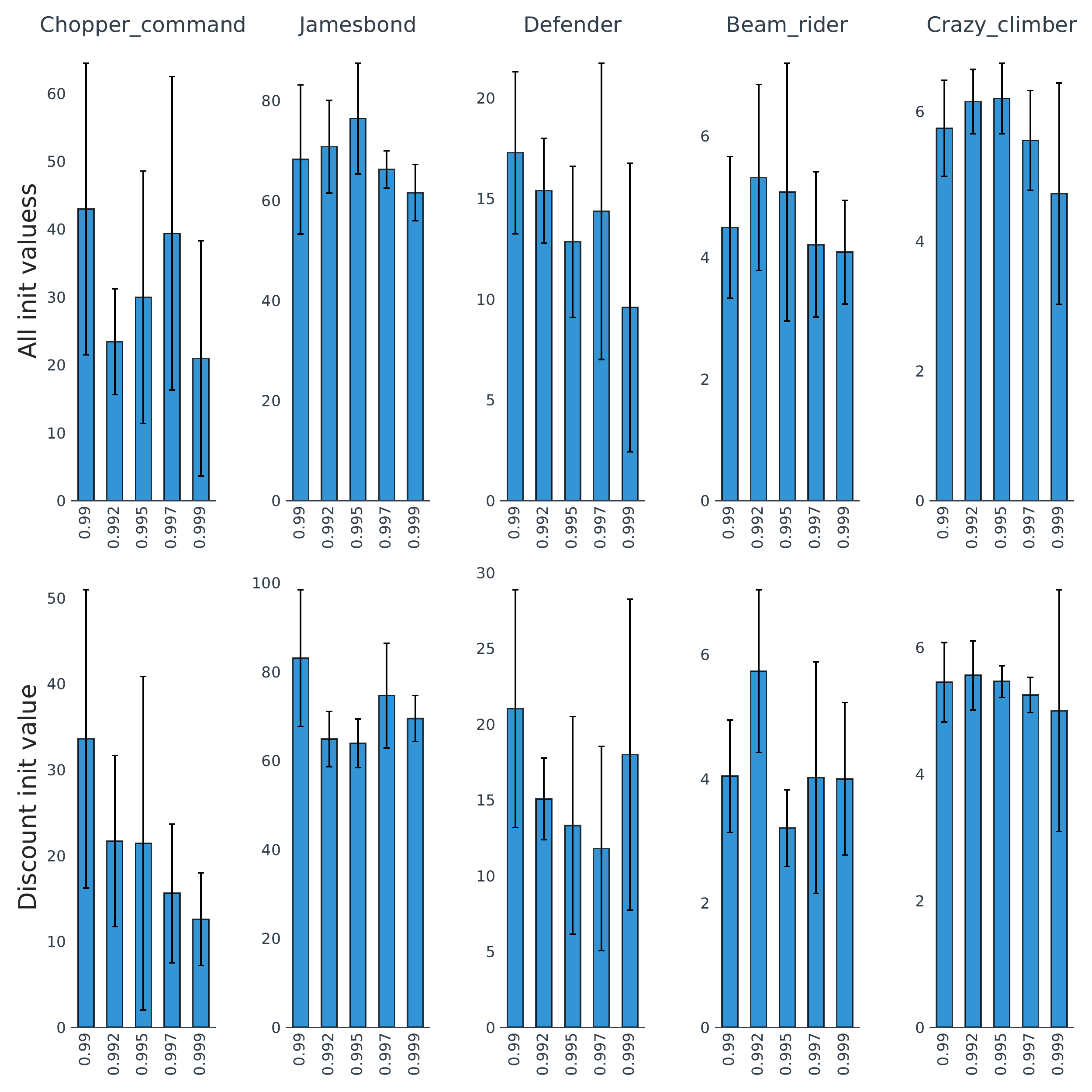}}
    \caption{Robustness results in Atari. Mean and confidence intervals (over 3 seeds), after 200M frames of training. \textbf{\cref{fig:rob_gamma} and \cref{fig:rob_baseline}:} blue bars correspond to STACX and red bars to IMPALA. Rows correspond to specific Atari games, and columns to the value of the hyper parameter in the outer loss ($\gamma, g_v$). We observe that STACX is better than IMPALA in 72$\%$ of the runs (\cref{fig:rob_gamma}) and in 80$\%$ of the runs in \cref{fig:rob_baseline}. \textbf{\cref{fig:rob_init}:} Robustness to the initialisation of the metaparameters. Columns correspond to different games. \textbf{Bottom:} perturbing $\gamma^{\text{init}} \in \{0.99,0.992,0.995,0.997,0.999 \}$. \textbf{Top:} perturbing all the meta parameter initialisations. I.e., setting all the hyperparamters  $\{\gamma^{\text{init}},\lambda^{\text{init}}, g_v^{\text{init}}, g_e^{\text{init}}, g_p^{\text{init}}, \alpha^{\text{init}} \} _{i=1}^3 $ to a single fixed value in $\{0.99,0.992,0.995,0.997,0.999 \}$.  }
    \label{fig:rob}
\end{figure}

\textbf{Adaptivity.}
\label{sec:adap}
In \cref{fig:adaptivity_jamesbond} we visualize the metaparameters of STACX during training. The metaparameters associated with the policy head (head number 1) are in blue, and the auxiliary heads (2 and 3) are in orange and magenta. We present the values of the metaparameters used in the inner loss, i.e., after we apply a sigmoid activation. But to have a single scale for all the metaparameters ($\eta \in [0,1]$), we present the loss coefficients $g_e,g_v,g_p$ without scaling them by the respective value in the outer loss. For example, the value of the entropy weight $g_e$ that is presented in \cref{fig:adaptivity_jamesbond} is further multiplied by $g_e^{\text{outer}}=0.01$ when used in the inner loss. 
As there are many metaparameters, seeds, and games, we only present results on a single seed (chosen arbitrarily to 1) and a single game (James Bond). In the supplementary  we provide examples for all the games (\cref{sec:lc_atari}). 

Inspecting \cref{fig:adaptivity_jamesbond}, one can notice that the metaparameters are being adapted in a none monotonic manner that could not have been designed by hand. We highlight a few trends which are visible in \cref{fig:adaptivity_jamesbond} and we found to repeat across games (\cref{sec:lc_atari}). The metaparameters of the auxiliary heads are self-tuned to have relatively similar values but different than those of the main head. For example, the main head discount factor converges to the value in the outer loss (0.995). In contrast, the auxiliary heads' discount factors often change during training and get to lower values. Another observation is that the leaky V-trace parameter $\alpha$ remains close to 1 at the beginning of training, so it is quite similar to V-trace. Towards the end of the training, it self-tunes to lower values (closer to importance sampling), consistently across games. We emphasize that these observations imply that adaptivity happens in self-tuning agents. It does not imply that this adaptivity is directly helpful. We can only deduce this connection implicitly, i.e., we observe that self-tuning agents achieve higher performance and adapt their metaparameters through training.

In \cref{fig:discovery}, we experimented with a variation of STACX that self-tunes both $\alpha_\rho $ and $\alpha_c$ without imposing  $\alpha_\rho\ge\alpha_c$ (as \cref{thm:leaky} requires to guarantee contraction). Inspecting \cref{fig:discovery}, we can see that STACX self-tunes  $\alpha_\rho \ge \alpha_c$ in James Bond. In addition, we measured that across the $57$ games $\alpha_\rho \ge \alpha_c$ in $91.2\%$ of the time (averaged over time, seeds, and games), and that $\alpha_\rho \ge 0.99\alpha_c$ in $99.2\%$ of the time. In terms of performance, the median score ($353\%$) was slightly worse than STACX. A possible explanation is that while this variation allows more flexibility, it may also be less stable as the contraction is not guaranteed. 

In another variation we self-tuned $\alpha$ together with a single truncation parameter $\bar \rho = \bar c.$ This variation performed worse, achieving a median score of $301\%$, which may be explained by $\bar \rho$ not being differentiable, suffering from nonsmooth (sub) gradients and possibly saturated IS truncation levels.

\begin{figure}[h]
    \centering
    \subfigure[Adaptivity in james Bond.]{\label{fig:adaptivity_jamesbond}\includegraphics[width=0.45\linewidth]{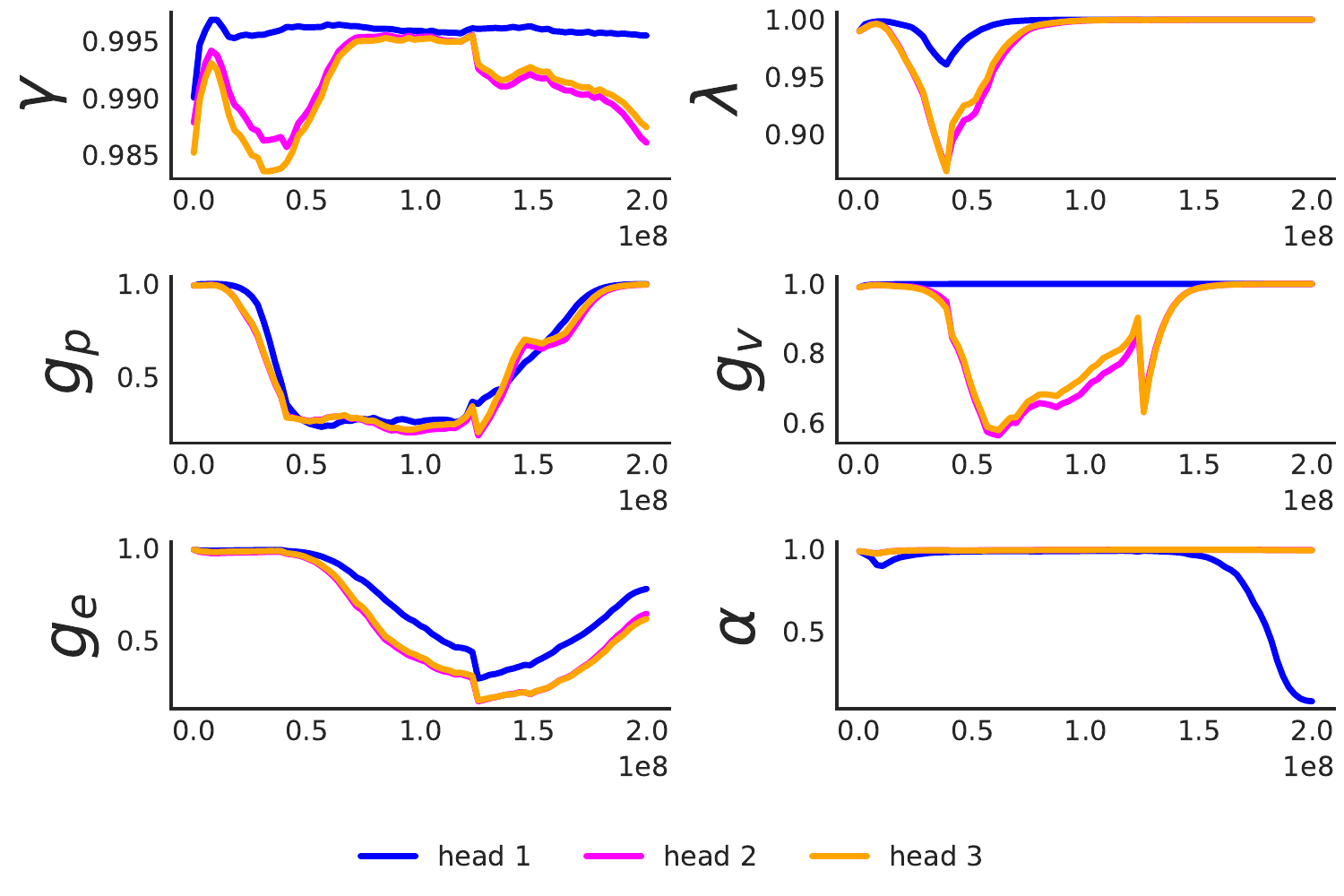}}
    \subfigure[Discovery of $\alpha_\rho \ge \alpha_c$.]{\label{fig:discovery}\includegraphics[width=0.52\linewidth]{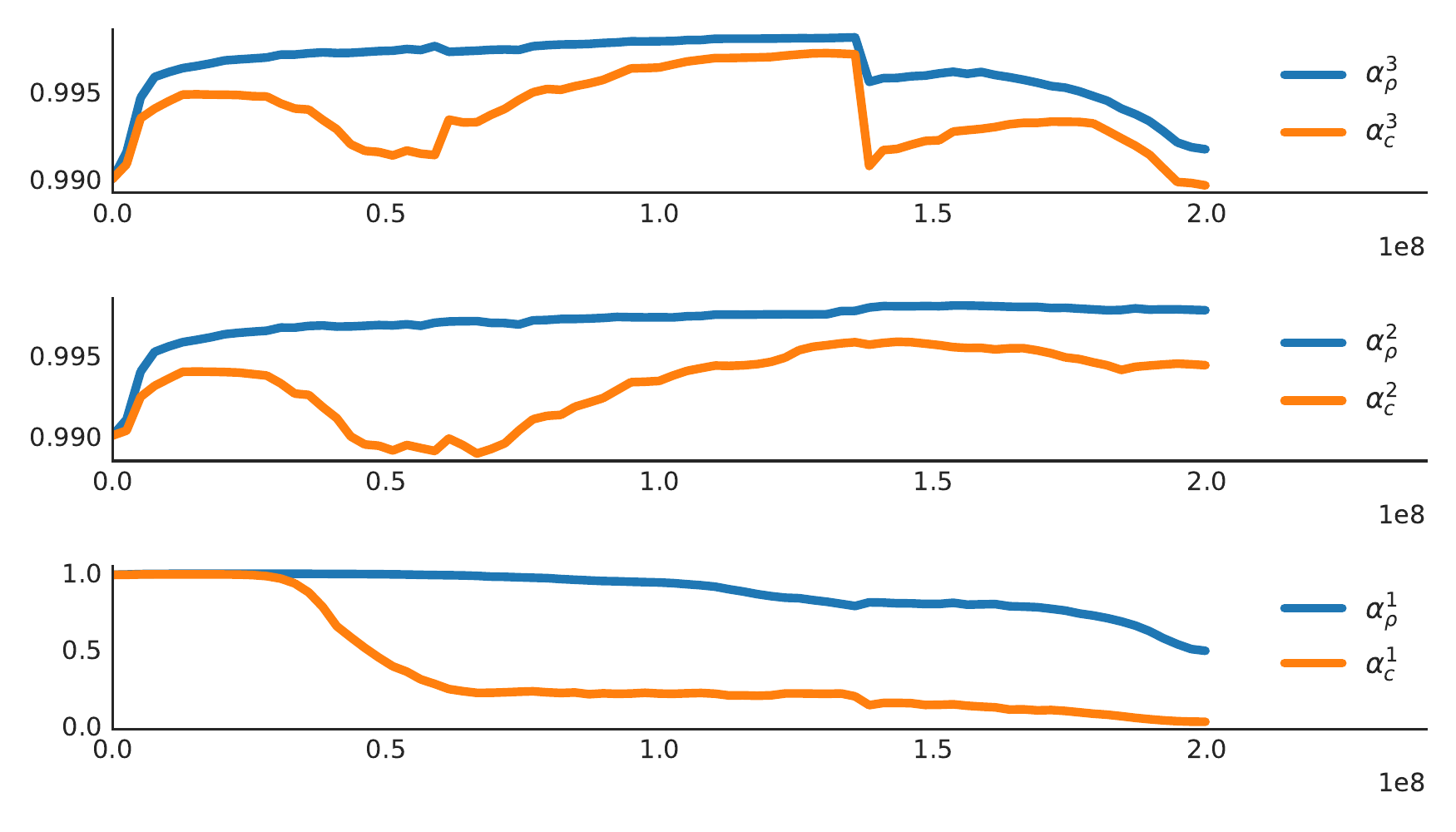}}
\end{figure}

\section{Summary}
In this work, we demonstrated that it is feasible to self-tune all the differentiable hyperparameters in an actor-critic loss function. We presented STAC and STACX, actor-critic algorithms that self-tune a large number of hyperparameters of different nature (controlling important trade-offs in a reinforcement learning agent) online, within a single lifetime. We showed that these agents' performance improves as they self-tune more hyperparameters. In addition, the algorithms are computationally efficient and robust to their hyperparameters. Despite being an online algorithm, STACX achieved very high results in the ALE and the RWRL domains. 

We plan to extend STACX with Experience Replay to make it more data-efficient in future work. By embracing self-tuning via metagradients, we were able to introduce these novel ideas into our agent, without having to tune their new hyperparameters. However, we emphasize that STACX is not a fully parameter-free algorithm; we hope to investigate further how to make STACX less dependent on the hyperparameters of the outer loss in future work.

\clearpage
\section{Broader Impact}
The last decade has seen significant improvements in Deep Reinforcement Learning algorithms. To make these algorithms more general, it became a common practice in the DRL community to measure the performance of a single DRL algorithm by evaluating it in a diverse set of environments, where at the same time, it must use a single set of hyperparameters. That way, it is less likely to overfit the agent's hyperparameters to specific domains, and more general properties can be discovered. These principles are reflected in popular DRL benchmarks like the ALE and the DM control suite. 

In this paper, we focus on exactly that goal and design a self-tuning RL agent that performs well across a diverse set of environments. Our agent starts with a global loss function that is shared across the environments in each benchmark. But then, it has the flexibility to self-tune this loss function, separately in each domain. Moreover, it can adapt its loss function within a single lifetime to account for inherent non-stationarities in RL algorithms - exploration vs. exploitation, changing data distribution, and degree of off-policy. 

While using meta-learning to tune hyperparameters is not new, we believe that we have made significant progress that will convince many people in the DRL community to use metagradients. We demonstrated that our agent performs significantly better than the baseline algorithm in four benchmarks. The relative improvement is much more significant than in previous metagradient papers and is demonstrated across a wider range of environments. While each of these benchmarks is diverse on its own,  together, they give even more significant evidence to our approach's generality. 

Furthermore, we show that it's possible to self-tune tenfold more metaparameters from different types. We also showed that we gain improvement from self-tuning various subsets of the meta parameters, and that performance kept improving as we self-tuned more metaparameters. Finally, we have demonstrated how embracing self-tuning can help to introduce new concepts (leaky V-trace and parameterized auxiliary tasks) to RL algorithms without needing tuning.  

\section{Acknowledgements}
We would like to thank Adam White and Doina Precup for their comments, Manuel Kroiss for implementing the computing infrastructure described in \cref{compute} and Thomas Degris for the environments infrastructure design.

The author(s) received no specific funding for this work.
\clearpage

\bibliography{paper_bib}

\begin{thebibliography}{30}
\providecommand{\natexlab}[1]{#1}
\providecommand{\url}[1]{\texttt{#1}}
\expandafter\ifx\csname urlstyle\endcsname\relax
  \providecommand{\doi}[1]{doi: #1}\else
  \providecommand{\doi}{doi: \begingroup \urlstyle{rm}\Url}\fi

\bibitem[Bellemare et~al.(2013)Bellemare, Naddaf, Veness, and
  Bowling]{bellemare2013arcade}
Bellemare, M.~G., Naddaf, Y., Veness, J., and Bowling, M.
\newblock The arcade learning environment: An evaluation platform for general
  agents.
\newblock \emph{Journal of Artificial Intelligence Research}, 47:\penalty0
  253--279, 2013.

\bibitem[Bergstra \& Bengio(2012)Bergstra and Bengio]{RSHPO}
Bergstra, J. and Bengio, Y.
\newblock Random search for hyper-parameter optimization.
\newblock \emph{J. Mach. Learn. Res.}, 13\penalty0 (1):\penalty0 281–305,
  February 2012.
\newblock ISSN 1532-4435.

\bibitem[Bradbury et~al.(2018)Bradbury, Frostig, Hawkins, Johnson, Leary,
  Maclaurin, and Wanderman-Milne]{jax2018github}
Bradbury, J., Frostig, R., Hawkins, P., Johnson, M.~J., Leary, C., Maclaurin,
  D., and Wanderman-Milne, S.
\newblock {JAX}: composable transformations of {P}ython+{N}um{P}y programs,
  2018.
\newblock URL \url{http://github.com/google/jax}.

\bibitem[Budden et~al.(2020)Budden, Hessel, Quan, Kapturowski, Baumli,
  Bhupatiraju, Guy, and King]{rlax2020github}
Budden, D., Hessel, M., Quan, J., Kapturowski, S., Baumli, K., Bhupatiraju, S.,
  Guy, A., and King, M.
\newblock {RL}ax: {R}einforcement {L}earning in {JAX}, 2020.
\newblock URL \url{http://github.com/deepmind/rlax}.

\bibitem[Dulac-Arnold et~al.(2020)Dulac-Arnold, Levine, Mankowitz, Li,
  Paduraru, Gowal, and Hester]{dulac2020empirical}
Dulac-Arnold, G., Levine, N., Mankowitz, D.~J., Li, J., Paduraru, C., Gowal,
  S., and Hester, T.
\newblock An empirical investigation of the challenges of real-world
  reinforcement learning.
\newblock \emph{arXiv preprint arXiv:2003.11881}, 2020.

\bibitem[Espeholt et~al.(2018)Espeholt, Soyer, Munos, Simonyan, Mnih, Ward,
  Doron, Firoiu, Harley, Dunning, et~al.]{espeholt2018impala}
Espeholt, L., Soyer, H., Munos, R., Simonyan, K., Mnih, V., Ward, T., Doron,
  Y., Firoiu, V., Harley, T., Dunning, I., et~al.
\newblock Impala: Scalable distributed deep-rl with importance weighted
  actor-learner architectures.
\newblock \emph{arXiv preprint arXiv:1802.01561}, 2018.

\bibitem[Fedus et~al.(2019)Fedus, Gelada, Bengio, Bellemare, and
  Larochelle]{fedus2019hyperbolic}
Fedus, W., Gelada, C., Bengio, Y., Bellemare, M.~G., and Larochelle, H.
\newblock Hyperbolic discounting and learning over multiple horizons.
\newblock \emph{arXiv preprint arXiv:1902.06865}, 2019.

\bibitem[Finn et~al.(2017)Finn, Abbeel, and Levine]{finn2017model}
Finn, C., Abbeel, P., and Levine, S.
\newblock Model-agnostic meta-learning for fast adaptation of deep networks.
\newblock In \emph{Proceedings of the 34th International Conference on Machine
  Learning-Volume 70}, pp.\  1126--1135. JMLR. org, 2017.

\bibitem[Haarnoja et~al.(2018)Haarnoja, Zhou, Hartikainen, Tucker, Ha, Tan,
  Kumar, Zhu, Gupta, Abbeel, et~al.]{haarnoja2018soft}
Haarnoja, T., Zhou, A., Hartikainen, K., Tucker, G., Ha, S., Tan, J., Kumar,
  V., Zhu, H., Gupta, A., Abbeel, P., et~al.
\newblock Soft actor-critic algorithms and applications.
\newblock \emph{arXiv preprint arXiv:1812.05905}, 2018.

\bibitem[Hennigan et~al.(2020)Hennigan, Cai, Norman, and
  Babuschkin]{haiku2020github}
Hennigan, T., Cai, T., Norman, T., and Babuschkin, I.
\newblock {H}aiku: {S}onnet for {JAX}, 2020.
\newblock URL \url{http://github.com/deepmind/dm-haiku}.

\bibitem[Hessel et~al.(2020)Hessel, Budden, Viola, Rosca, Sezener, and
  Hennigan]{optax2020github}
Hessel, M., Budden, D., Viola, F., Rosca, M., Sezener, E., and Hennigan, T.
\newblock Optax: composable gradient transformation and optimisation, in jax!,
  2020.
\newblock URL \url{http://github.com/deepmind/optax}.

\bibitem[Hessel et~al.(2021)Hessel, Kroiss, Clark, Kemaev, Quan, Keck, Viola,
  and van Hasselt]{hessel2021podracer}
Hessel, M., Kroiss, M., Clark, A., Kemaev, I., Quan, J., Keck, T., Viola, F.,
  and van Hasselt, H.
\newblock Podracer architectures for scalable reinforcement learning, 2021.

\bibitem[Jaderberg et~al.(2016)Jaderberg, Mnih, Czarnecki, Schaul, Leibo,
  Silver, and Kavukcuoglu]{jaderberg2016reinforcement}
Jaderberg, M., Mnih, V., Czarnecki, W.~M., Schaul, T., Leibo, J.~Z., Silver,
  D., and Kavukcuoglu, K.
\newblock Reinforcement learning with unsupervised auxiliary tasks.
\newblock \emph{arXiv preprint arXiv:1611.05397}, 2016.

\bibitem[Jaderberg et~al.(2017)Jaderberg, Dalibard, Osindero, Czarnecki,
  Donahue, Razavi, Vinyals, Green, Dunning, Simonyan,
  et~al.]{jaderberg2017population}
Jaderberg, M., Dalibard, V., Osindero, S., Czarnecki, W.~M., Donahue, J.,
  Razavi, A., Vinyals, O., Green, T., Dunning, I., Simonyan, K., et~al.
\newblock Population based training of neural networks.
\newblock \emph{arXiv preprint arXiv:1711.09846}, 2017.

\bibitem[Jouppi et~al.(2017)Jouppi, Young, Patil, Patterson, Agrawal, Bajwa,
  Bates, Bhatia, Boden, Borchers, Boyle, Cantin, Chao, Clark, Coriell, Daley,
  Dau, Dean, Gelb, Ghaemmaghami, Gottipati, Gulland, Hagmann, Ho, Hogberg, Hu,
  Hundt, Hurt, Ibarz, Jaffey, Jaworski, Kaplan, Khaitan, Koch, Kumar, Lacy,
  Laudon, Law, Le, Leary, Liu, Lucke, Lundin, MacKean, Maggiore, Mahony,
  Miller, Nagarajan, Narayanaswami, Ni, Nix, Norrie, Omernick, Penukonda,
  Phelps, Ross, Salek, Samadiani, Severn, Sizikov, Snelham, Souter, Steinberg,
  Swing, Tan, Thorson, Tian, Toma, Tuttle, Vasudevan, Walter, Wang, Wilcox, and
  Yoon]{TPUs}
Jouppi, N.~P., Young, C., Patil, N., Patterson, D.~A., Agrawal, G., Bajwa, R.,
  Bates, S., Bhatia, S., Boden, N., Borchers, A., Boyle, R., Cantin, P., Chao,
  C., Clark, C., Coriell, J., Daley, M., Dau, M., Dean, J., Gelb, B.,
  Ghaemmaghami, T.~V., Gottipati, R., Gulland, W., Hagmann, R., Ho, R.~C.,
  Hogberg, D., Hu, J., Hundt, R., Hurt, D., Ibarz, J., Jaffey, A., Jaworski,
  A., Kaplan, A., Khaitan, H., Koch, A., Kumar, N., Lacy, S., Laudon, J., Law,
  J., Le, D., Leary, C., Liu, Z., Lucke, K., Lundin, A., MacKean, G., Maggiore,
  A., Mahony, M., Miller, K., Nagarajan, R., Narayanaswami, R., Ni, R., Nix,
  K., Norrie, T., Omernick, M., Penukonda, N., Phelps, A., Ross, J., Salek, A.,
  Samadiani, E., Severn, C., Sizikov, G., Snelham, M., Souter, J., Steinberg,
  D., Swing, A., Tan, M., Thorson, G., Tian, B., Toma, H., Tuttle, E.,
  Vasudevan, V., Walter, R., Wang, W., Wilcox, E., and Yoon, D.~H.
\newblock In-datacenter performance analysis of a tensor processing unit.
\newblock \emph{CoRR}, abs/1704.04760, 2017.
\newblock URL \url{http://arxiv.org/abs/1704.04760}.

\bibitem[Maas et~al.(2013)Maas, Hannun, and Ng]{maas2013rectifier}
Maas, A.~L., Hannun, A.~Y., and Ng, A.~Y.
\newblock Rectifier nonlinearities improve neural network acoustic models.
\newblock In \emph{in ICML Workshop on Deep Learning for Audio, Speech and
  Language Processing}. Citeseer, 2013.

\bibitem[Mann et~al.(2016)Mann, Penedones, Mannor, and
  Hester]{mann2016adaptive}
Mann, T.~A., Penedones, H., Mannor, S., and Hester, T.
\newblock Adaptive lambda least-squares temporal difference learning.
\newblock \emph{arXiv preprint arXiv:1612.09465}, 2016.

\bibitem[Paul et~al.(2019)Paul, Kurin, and Whiteson]{NIPS2019_8710}
Paul, S., Kurin, V., and Whiteson, S.
\newblock Fast efficient hyperparameter tuning for policy gradient methods.
\newblock In \emph{Advances in Neural Information Processing Systems 32}, pp.\
  4616--4626. Curran Associates, Inc., 2019.
\newblock URL
  \url{http://papers.nips.cc/paper/8710-fast-efficient-hyperparameter-tuning-for-policy-gradient-methods.pdf}.

\bibitem[Rowland et~al.(2019)Rowland, Dabney, and Munos]{rowland2019adaptive}
Rowland, M., Dabney, W., and Munos, R.
\newblock Adaptive trade-offs in off-policy learning.
\newblock \emph{arXiv preprint arXiv:1910.07478}, 2019.

\bibitem[Schaul et~al.(2019)Schaul, Borsa, Ding, Szepesvari, Ostrovski, Dabney,
  and Osindero]{schaul2019adapting}
Schaul, T., Borsa, D., Ding, D., Szepesvari, D., Ostrovski, G., Dabney, W., and
  Osindero, S.
\newblock Adapting behaviour for learning progress.
\newblock \emph{arXiv preprint arXiv:1912.06910}, 2019.

\bibitem[Schmitt et~al.(2019)Schmitt, Hessel, and Simonyan]{schmitt2019off}
Schmitt, S., Hessel, M., and Simonyan, K.
\newblock Off-policy actor-critic with shared experience replay.
\newblock \emph{arXiv preprint arXiv:1909.11583}, 2019.

\bibitem[Schrittwieser et~al.(2019)Schrittwieser, Antonoglou, Hubert, Simonyan,
  Sifre, Schmitt, Guez, Lockhart, Hassabis, Graepel,
  et~al.]{schrittwieser2019mastering}
Schrittwieser, J., Antonoglou, I., Hubert, T., Simonyan, K., Sifre, L.,
  Schmitt, S., Guez, A., Lockhart, E., Hassabis, D., Graepel, T., et~al.
\newblock Mastering atari, go, chess and shogi by planning with a learned
  model.
\newblock \emph{arXiv preprint arXiv:1911.08265}, 2019.

\bibitem[Sutton(1992)]{sutton1992adapting}
Sutton, R.~S.
\newblock Adapting bias by gradient descent: An incremental version of
  delta-bar-delta.
\newblock In \emph{AAAI}, pp.\  171--176, 1992.

\bibitem[Tang \& Choromanski(2020)Tang and Choromanski]{tang2020online}
Tang, Y. and Choromanski, K.
\newblock Online hyper-parameter tuning in off-policy learning via evolutionary
  strategies.
\newblock \emph{arXiv preprint arXiv:2006.07554}, 2020.

\bibitem[Tassa et~al.(2018)Tassa, Doron, Muldal, Erez, Li, Casas, Budden,
  Abdolmaleki, Merel, Lefrancq, et~al.]{tassa2018deepmind}
Tassa, Y., Doron, Y., Muldal, A., Erez, T., Li, Y., Casas, D. d.~L., Budden,
  D., Abdolmaleki, A., Merel, J., Lefrancq, A., et~al.
\newblock Deepmind control suite.
\newblock \emph{arXiv preprint arXiv:1801.00690}, 2018.

\bibitem[Veeriah et~al.(2019)Veeriah, Hessel, Xu, Rajendran, Lewis, Oh, van
  Hasselt, Silver, and Singh]{veeriah2019discovery}
Veeriah, V., Hessel, M., Xu, Z., Rajendran, J., Lewis, R.~L., Oh, J., van
  Hasselt, H.~P., Silver, D., and Singh, S.
\newblock Discovery of useful questions as auxiliary tasks.
\newblock In \emph{Advances in Neural Information Processing Systems}, pp.\
  9306--9317, 2019.

\bibitem[White \& White(2016)White and White]{white2016greedy}
White, M. and White, A.
\newblock A greedy approach to adapting the trace parameter for temporal
  difference learning.
\newblock In \emph{Proceedings of the 2016 International Conference on
  Autonomous Agents \& Multiagent Systems}, pp.\  557--565, 2016.

\bibitem[Xu et~al.(2015)Xu, Wang, Chen, and Li]{xu2015empirical}
Xu, B., Wang, N., Chen, T., and Li, M.
\newblock Empirical evaluation of rectified activations in convolutional
  network.
\newblock \emph{arXiv preprint arXiv:1505.00853}, 2015.

\bibitem[Xu et~al.(2018)Xu, van Hasselt, and Silver]{xu2018meta}
Xu, Z., van Hasselt, H.~P., and Silver, D.
\newblock Meta-gradient reinforcement learning.
\newblock In \emph{Advances in neural information processing systems}, pp.\
  2396--2407, 2018.

\bibitem[Zheng et~al.(2018)Zheng, Oh, and Singh]{zheng2018learning}
Zheng, Z., Oh, J., and Singh, S.
\newblock On learning intrinsic rewards for policy gradient methods.
\newblock In \emph{Advances in Neural Information Processing Systems}, pp.\
  4644--4654, 2018.

\end{thebibliography}
\bibliographystyle{icml2020}

\clearpage
\onecolumn

\section{Computing Infrastructure}
\label{compute}

We run our experiments using Sebulba, a podracer infrastructure \citep{hessel2021podracer} implemented in JAX \citep{jax2018github}, using the JAX libraries Haiku \citep{haiku2020github}, RLax \citep{rlax2020github} and Optax \citep{optax2020github}. The computing infrastructure is based on an actor-learner decomposition \citep{hessel2021podracer}, where multiple actors generate experience in parallel, and this experience is channelled into a learner.

It allows us to run experiments in two modalities. In the first modality, following \cite{espeholt2018impala}, the actors programs are distributed across multiple CPU machines, and both the stepping of environments and the network inference happens on CPU. The data generated by the actor programs is processed in batches by a single learner using a GPU. Alternatively, both the actors and learners are co-located on a single machine, where the host is equipped with 56 CPU cores and connected to 8 TPU cores \citep{TPUs}. To minimize the effect of Python's Global Interpreter Lock, each actor-thread interacts with a \textit{batched environment}; this is exposed to Python as a single special environment that takes a batch of actions and returns a batch of observations, but that behind the scenes steps each environment in the batch in C++. The actor threads share 2 of the 8 TPU cores (to perform inference on the network), and send batches of fixed size trajectories of length T to a queue. The learner threads takes these batches of trajectories and splits them across the remaining 6 TPU cores for computing the parameter update (these are averaged with an all reduce across the participating cores). Updated parameters are sent to the actor's TPU cores via a fast device to device channel, as soon as the new parameters are available. This minimal unit can be replicates across multiple hosts, each connected to its own 56 CPU cores and 8 TPU cores, in which case the learner updates are synced and averaged across all cores (again via fast device to device communication).

\section{Reproducibility}

\subsection{Open-sourcing}

We open-sourced our JAX implementation of the computation of leaky V-trace targets from trajectories of experience. The code can be found at www.github.com/anonymised-link.

\subsection{Pseudo code}
In this Section, we provide pseudo-code for STAC and STACX. To improve reproducibility, we also include information on where to stop gradients when computing the inner and outer losses. For that goal, we denote by $\text{sg}()$ the operation that is stopping the gradients from propagating through a function. 

We begin with \cref{alg:loss} that explains how to compute the IMPALA loss function. \cref{alg:loss} gets as inputs a set of trajectories $\Tau$, the agent parameters $\theta,$ parameters $\zeta$ that include hyperparameters and metaparameters, and a flag that indicates if it is an inner or an outer loss (see lines 23-26 to). 

\begin{algorithm}
\caption{IMPALA loss with V-trace}
\label{alg:loss}
\begin{algorithmic}[1]
    \State \textbf{Inputs}: $\Tau, \theta, \zeta, \text{inner loss}$ 
    \begin{itemize}
        \item \textbf{Data $\Tau$:} $m$ trajectories $\left\{ \tau_i \right\}_{i=1}^m$ of size $n,$ $\tau_i = \left\{ x^i_s, a^i_s, r^i_s, \mu(a^i_s|x^i_s) \right\}_{s=1}^n,$ where $\mu(a^i_s|x^i_s) $ is the probability assigned to $a^i_s$ in state $x^i_s$ by the behaviour policy $\mu(a|x).$
        \item  \textbf{Hyperparameters:} $\zeta = \left\{ g_v,g_e,g_p,\gamma,\lambda, \alpha, \bar c, \bar \rho \right\}.$
        \item  \textbf{Agent parameters:} $\theta,$ which define the policy $\pi_\theta(a|x)$ and value function $V_\theta (x)$.
        \item \textbf{Inner Loss:} Boolean flag representing inner/outer loss. 
    \end{itemize}
    \Statex
    \For{$i = 1,\ldots,m$} \Comment{Compute leaky V-trace targets}
        \For{$s = 1,\ldots,n$}
            \State Let $\text{IS}^i_s = \text{sg}(\frac{\pi(a^i_s|x^i_s)}{\mu(a^i_s|x^i_s)})$ \Comment{Importance sampling ratios} 
            \State Set  $c^i_s =  \left((1 - \alpha)  \text{IS}^i_s + \alpha \min (\bar c, \text{IS}^i_s)\right) \lambda$
            \State Set  $\rho^i_s =  (1 - \alpha)  \text{IS}^i_s + \alpha \min (\bar \rho, \text{IS}^i_s)$
            \State Set $\delta^i_s = \rho^i_s (r^i_{s+1} + \gamma \text{sg} (V_\theta(x^i_{s+1})) - \text{sg}(V_\theta(x^i_{s})))$ \Comment{One step td errors} 
        \EndFor
        \State Let $e^i_{n+1} = 0$
        \For{$s = n,\ldots,1$} \Comment{Compute n-step  td-errors backwards}
            \State $e_s^i = \delta^i_s + \gamma c_s^i e^i_{s+1}$
        \EndFor
        \For{$s = 1,\ldots,n$} \Comment{Compute leaky V-trace targets}
           \State $v_s^i = e_s^i + \text{sg}(V_\theta(x^i_{s}))$
        \EndFor
    \EndFor
    
    \Statex
    \State $L_{\text{Value}}(\theta) = g_v \cdot  \sum_{i=1,s=1}^{i=m,s=n-1} \left( v_s^i - V_\theta(x^i_{s})\right)^2$
    \State $L_{\text{Entropy}}(\theta) = -g_e \cdot  \sum_{i=1,s=1}^{i=m,s=n-1} \sum_{a} \pi(a_s^i|x_s^i)\log (\pi(a_s^i|x_s^i))$
    \If{Inner Loss}
    \State $L_{\text{Policy}}(\theta) = -g_p \cdot  \sum_{i=1,s=1}^{i=m,s=n-1} \rho_s^i \log (\pi(a_s^i|x_s^i)) \left( r_{s+1}^i + \gamma v_{s+1}^i - \text{sg}(V_\theta(x^i_{s}))\right)$
    \Else
    \State $L_{\text{Policy}}(\theta) = -g_p \cdot  \sum_{i=1,s=1}^{i=m,s=n-1} \rho_s^i \log (\pi(a_s^i|x_s^i)) \text{sg}\left( r_{s+1}^i + \gamma v_{s+1}^i - V_\theta(x^i_{s})\right)$
    \EndIf
    \State \textbf{Return} $L(\theta) = L_{\text{Value}}(\theta) + L_{\text{Policy}}(\theta)+ L_{\text{Entropy}}(\theta)$
\end{algorithmic}
\end{algorithm}

At each iteration $t$ \cref{alg:stac} calls \cref{alg:loss} to update the parameters $\theta_t$ and the metaparameters $\eta_t$ by differentiating the inner loss w.r.t $\theta$ and for the outer losses w.r.t $\eta$. In line 10, given the current metaparameters $\eta_t$, we apply a set of nonlinear transformations to compute the values of the hyperparameters of the inner loss. We apply a sigmoid activation on all the metaparameters, which ensures that they remain bounded, and multiply the loss coefficients by their corresponding values in the outer loss to guarantee that they are initialized from the same values.

\begin{algorithm}
\caption{Inner and outer loss}
\label{alg:stac}
\begin{algorithmic}[1]
\State Let $\text{Loss}(\Tau,\theta,\zeta, \text{inner loss})$ be a function that calls \cref{alg:loss}.
\State Let $\sigma(x)$  denote applying a sigmoid activation on $x$.
\State Let $\bar \rho =1, \bar c = 1$ be fixed truncation levels.
\State Denote the metaparameters at time $t$ by $\eta_t = \left\{\gamma, \lambda, \alpha, g_v, g_p, g_e \right\}$ 
\State Denote the hyperparameters by $\zeta = \left\{\gamma^{\text{outer}}, \lambda^{\text{outer}}, \alpha^{\text{outer}}, g_v^{\text{outer}}, g_p^{\text{outer}}, g_e^{\text{outer}}, \bar \rho, \bar c\right\}$
\State Let $\text{OPT}, \text{MetaOPT}$ be the optimizer and meta optimizer with their respective hyper parameters
\State Let $g^{\text{KL}}$ be the KL loss coefficient. 
\For{t  = 1 \ldots }
\State Collect trajectories $\Tau = \left\{\tau_i\right\}_{i=1}^m$ using the behaviour policy $\mu_t$
\State Set $\zeta(\eta_t) = \left\{\sigma(\gamma), \sigma(\lambda), \sigma(\alpha), \sigma(g_v)g_v^{\text{outer}}, \sigma(g_p)g_p^{\text{outer}}, \sigma(g_e)g_e^{\text{outer}}, \bar \rho, \bar c\right\}$ 
\State $\nabla{\theta_t}(\eta_t) = \frac{1}{P} \sum_{p=1}^P \nabla_\theta \left( \text{Loss} (\Tau, \theta_t^p,\zeta(\eta_t), \text{True})\right)$ \Comment{Gradient of the inner loss w.r.t $\theta$}
\State $\theta_{t+1}(\eta_t) = \ \text{OPT} (\theta_t, \nabla{\theta_t}(\eta_t))$
\State $\nabla{\eta_t} = \nabla_\eta \left( \text{Loss} (\Tau, \theta^1_{t+1}(\eta_t),\zeta, \text{False})\right)$
\Comment{Metagradient of the outer loss w.r.t $\eta$}
\State $\nabla{\eta_t}=\nabla{\eta_t} + g^{\text{KL}}\nabla_{\eta_t} \text{KL}(\pi_{\theta_{t+1}(\eta_t)}, \pi_{\theta_t}; \Tau)$
\Comment{Metagradient of the KL loss}

\State $\eta_{t+1} = \text{MetaOPT} (\eta_t, \nabla{\eta_t})$
\EndFor
\end{algorithmic}
\end{algorithm}

In line 11 the gradient of the inner loss is computed w.r.t $\theta$. Since the metaparameters define the loss, this gradient is parametrized by the metaparameters. We repeat the gradient computation for each head $p \in [1..P]$ where $P=1$ for STAC and $P=3$ for STACX; and the notation $\theta_t^p$ refers to the parameters that correspond to the p-th head. In practice, the heads share a single torso (see next subsection for details), so this step is computationally efficient. 

In line 12 we compute the update to the parameters $\theta$ by applying an optimizer update to parameters $\theta_t$ using the gradient $\nabla{\theta_t} (\eta_t)$ that was compute in line 11. In practice, we use the RMSProp optimizer for that. Since the metaparameters parametrized the gradient, so does $\theta_{t+1}.$

In line 13, we differentiate the outer loss, with fixed hyperparameters $\zeta$, w.r.t the metaparameters $\eta_t$. This is done by differentiating \cref{alg:loss} w.r.t $\eta_t$ through the parameterized parameters $\theta^1_{t+1}(\eta_t)$. Notice that when we compute the outer loss, we only take into account the loss of head 1. In line 14, we update the meta parameters by calling the meta optimizer (ADAM with default values). 

\subsection{Hyperparameters}
\label{sec:hyper}

\textbf{Architectures.}

\begin{table}[h]
\caption{Network architectures}
\begin{center}
\begin{tabular}{|l|l|l|l|}
    \hline
    Parameter & Atari & Control - pixels & Control - features  \\
    \hline 
    convolutions in block & (2, 2, 2) & (2, 2, 2) & - \\
    channels & (13, 32, 32) & (32, 32, 32) & -\\
    kernel sizes & (3, 3, 3) &  (3, 3, 3) & - \\
    kernel strides & (1, 1, 1) & (2, 2, 2) \\
    pool sizes & (3, 3, 3) & - & -\\
    pool strides & (2, 2, 2) & - & - \\
    mlp torso & - & - & (256, 256)\\
    lstm & - & 256 & 256 \\
    frame stacking & 4 & - & - \\
    head hiddens & 256 &  256 &  256 \\
    activation & Relu & Relu & Relu\\

    \hline
\end{tabular}
\end{center}
\label{table:hyperparameters2}
\end{table}

Our DNN architecture is composed of a shared torso, which then splits to different heads. We have a head for the policy and a head for the value function (multiplied by three for STACX). Each head is a two-layered MLP with 256 hidden units, where the output dimension corresponds to 1 for the vale function head. For the policy head, we have $|A|$ outputs that correspond to softmax logits when working with discrete actions (Atari),  and $2|A|$ outputs that correspond to the mean and standard deviation of a squashed Gaussian distribution with a diagonal covariance matrix in the case of continuous actions. We use ReLU activations on the outputs of all the layers besides the last layer. 

We use a softmax distribution for the policy and the entropy of this distribution for regularization for discrete actions. For continuous actions, we apply a tanh activation on the output of the mean. For the standard deviation, for output $y,$ the standard deviation is given by 

$$
\sigma (y) = \exp{ \sigma_{\text{min}} + 0.5 \cdot ( \sigma_{\text{max}} - \sigma_{\text{min}}) \cdot (\text{tanh}(y) + 1))}
$$

We then sample action from a Gaussian distribution with a diagonal covariance matrix $N(\mu,\sigma)$ and apply a tanh activation on the sample. These transformations guarantee that the sampled action is bounded. We then adjust the probability and log probabilities of the distribution by making a Jacobian correction \citep{haarnoja2018soft}. 

The \textbf{torso} of the network is defined per domain. When learning from features (RWRL and DM control), we use a two-layered MLP, with hidden layers specified in \cref{table:hyperparameters2}. For Atari and DM control from pixels, our network uses a convolution torso. The torso is composed of residual blocks. In each block there is a convolution layer, with stride, kernel size, channels specified per domain (Atari, Control from pixels) in \cref{table:hyperparameters2}, with an optional pooling layer following it. The convolution layer is followed by n - layers of convolutions (specified by blocks), with a skip contention. The output of these layers is of the same size of the input so they can be summed. The block convolutions have kernel size $3,$ stride $1$.  
The torso is followed by an LSTM (size specified in \cref{table:hyperparameters2}). In Atari, we do not use an LSTM, but use frame stacking (4) instead.

\textbf{Hyperparameters.}
\cref{table:hyperparameters} lists all the hyperparameters used by STAC and STACX. Most of the hyperparameters are shared across all domains (listed by Value), and follow the reported parameters from the IMPALA paper. Whenever they are not, we list the specific values that are used in each domain (listed by domain). 
\begin{table}[h]
\caption{Hyperparameters table}
\begin{center}
\begin{tabular}{|l|l|l|l|}
    \hline
    Parameter & Value & Atari & Control  \\
    \hline 
    total envirnoment steps & - & 200e6 & 30e6 \\
    optimizer & RMSPROP & - & -\\
    start learning rate & - & $6 \cdot 10^{-4}$ & $10^{-3}$ \\
    end learning rate & - & 0 & $10^{-4}$ \\
    decay & 0.99 & - & -\\
    eps & 0.1 & - & -\\
    batch size (m) & - & 32 & 24\\
    trajectory length (n) & - & 20 & 40\\
    overlap length & - & 0 & 30\\
    $\gamma^{\text{outer}}$ & -& $0.995$ & 0.99  \\
    $\lambda^{\text{outer}}$ & 1 & -& - \\ 
    $\alpha^{\text{outer}}$ & 1 & -& - \\
    $g_e^{\text{outer}}$ & 0.01 & -& - \\
    $g_v^{\text{outer}}$ & 0.25 & -& - \\
    $g_p^{\text{outer}}$ & 1 & -& - \\
    $g^{\text{kl}}$ & 1 & -& - \\
    meta optimizer & Adam & - & -\\
    meta learning rate & $10^{-3}$ & - & - \\
    b1 & 0.9 & - & - \\
    b2 & 0.999 & - & -\\
    eps & 1e-4 & - & -\\
    $\eta^{\text{init}}$ & 4.6 & -& - \\
    \hline
\end{tabular}
\end{center}
\label{table:hyperparameters}
\end{table}

As a design choice, we did not tune many of the hyperparameters. Instead, we use the hyperparameters that were reported for IMPALA in earlier work. For new hyperparameters, we preferred using default values. 

For example, we chose Adam as a meta optimizer with default hyperparameters that were not tuned. For control, we use $\gamma=0.99$, which is the default value of many agents in this domain. The network architectures are quite standard as well and was used in the IMPALA paper. 

We now list a few things that we did try to tune.

\textbf{Number of auxiliary tasks.} We have also experimented with other amounts of auxiliary losses, e.g., having $2, 5$, or $8$ auxiliary loss functions. In Atari, these variations performed better than having a single loss function (STAC) but slightly worse than having $3$. This can be further explained by \cref{fig:adaptivity_jamesbond}, which shows that the auxiliary heads are self-tuned to similar metaparameters. 

\textbf{Behaviour policy.}
We considered two variations of STACX that allow the other heads to act. 

\begin{enumerate}
    \item Random ensemble: The policy head is chosen at random from $[1,..,n],$, and the hyperparameters are differentiated w.r.t the performance of each of the heads in the outer loop.
    \item Average ensemble: The actor policy is defined to be the average logits of the heads, and we learn one additional head for the value function of this policy.
\end{enumerate} 

The metagradient in the outer loop is taken w.r.t the actor policy, and /or, each one of the heads individually. While these extensions seem interesting, in all of our experiments, they always led to a small decrease in performance when compared to our auxiliary task agent without these extensions. Similar findings were reported in \citep{fedus2019hyperbolic}.

\textbf{KL coefficient}. When we introduced this loss function, we did not use a coefficient for it. Thus, it had the default value 1. Later on, we revisited this design choice and tested what happens when we use $g^{\text{KL}} \in \{ 0, 0.3, 1, 2\}.$ We observed that the value of this parameter could affect our results, but not significantly, and we chose to remain with the default value (1), which seemed to perform the best. 

\subsection{Resource Usage}

\begin{table}[h]
\caption{Run times in minutes}
\begin{center}
    \begin{tabular}{|l|l|l|l|l|}
        \hline
        Domain & IMPALA & STAC & IMPALA Aux & STACX  \\
        \hline 
        RWRL & 44 & 44.3 & 55.3 & 56 \\ \hline 
        DM control, features & 30.1 & 30.3  & 36.8 & 36.9 \\\hline
        DM control, pixels & 93 & 93  & 97 & 97 \\\hline 
        Atari & 70 & 71  & 83 & 84  \\ \hline
    \end{tabular}
\end{center}
\label{table:runtime}
\end{table}

The average run time in the different environments is reported in \cref{table:runtime}. Thanks to massively parallelism, with modern hardware such as TPUs, agents are often bottlenecked by data in-feed rather than actual computation. This is true despite the use of fairly deep networks. As a result, we can increase the exact amount of computation performed on each step with a modest impact on the runtime. Inspecting \cref{table:runtime}, we can see that self-tuning indeed results with a very mild increase in run time. However, this does not mean that self-tuning costs the same amount of compute, which is hard to measure. 

To further investigate this, we measured the run time in Atari with a second hardware configuration (the combination of distributed CPUs with a GPU learner from section \ref{compute}). The run times of the different agents in Atari were 105, 129, 106, and 133 (for IMPALA, STAC, IMPALA Aux and STACX respectively).  With this hardware, the run time of all the agents was longer. When we compare the run time of the different agents, we can see that self-tuning required about $25\%$ more time, while the extra run time from having auxiliary loss functions was negligible. 

To conclude, the run times of the different agents is hardware specific. We observed that overall STAC and STACX result in slightly more compute than the baseline agent. 

\subsection{Reproducing Baseline Algorithms}
\label{sec:rep_xu}
Inspecting the results in \cref{fig:generality}, one may notice small differences between the results of IMPALA and using meta gradients to tune only $\lambda, \gamma$  compared to the results that were reported in \citep{xu2018meta}. 

We investigated the possible reasons for these differences. First, our method was implemented in a different codebase. Our code is written in JAX, compared to the implementation in \citep{xu2018meta} that was written in TensorFlow. This may explain the small difference in final performance between our IMPALA baseline that achieved $243\%$ median score (\cref{fig:generality}) and the result of Xu et al., which is slightly higher ($257.1\%$). 

Second, Xu et al. observed that embedding the $\gamma$ hyperparameter into the $\theta$ network improved their results significantly, reaching a final performance (when learning $\gamma,\lambda$) of $287.7\%$ when self-tuning only $\gamma$ and $\lambda$ (see section 1.4 in \citep{xu2018meta} for more details). When tuning only $\gamma,$ \citet{xu2018meta} report that without using the embedding, the performance of their meta gradient agent drops to $183\%$, even below the IMPALA baseline, where when using the $\gamma$ embedding, the performance increases to $267.9\%$ (for self-tuning only $\gamma$). 

When self-tuning only $\gamma$ but without using embedding, we also observed a slight decrease in performance ($243\%$->$240\%$), but not as significant as in \citep{xu2018meta}. We further investigated this difference by introducing the $\gamma$ embedding into our architecture. With $\gamma$ embedding, our method achieved a score of $280.6\%$ (for self tuning only $\lambda, \gamma$), which almost reproduces the results in \citep{xu2018meta}. 

We also introduced the same embedding mechanism to STACX when self-tuning all the metaparameters. In this case, for auxiliary loss $i$ we embed $\gamma_i$. We experimented with two variants, one that shares the embedding weights across the auxiliary tasks and one that learns a specific embedding for each auxiliary task. Both of these variants performed similarly ($306.8\%$, $307.7\%$ respectively), which is better than the result for self-tuning only $\gamma, \lambda$ ($280.6\%$). However, STACX performed better without the embedding ($364\%$), so we did not use the $\gamma$ embedding in our architecture. We leave it to future work to further investigate methods of combining the embedding mechanisms with the auxiliary loss functions. 

\clearpage

\section{Analysis of Leaky V-trace}
\label{sec:proof}
Define the Leaky V-trace operator $\tilde{\mathcal{R}}$:
\begin{equation}
    \label{eq:leaky_Bellman}
    \tilde{\mathcal{R}}V(x) = V(x) + \mathbb{E}_\mu \left[ 
    \sum_{t\ge0} \gamma ^t \left(\Pi_{i=0}^{t-1}\tilde c_i\right) \tilde \rho_t \left(r_t + \gamma V(x_{t+1}) - V(x_t)\right)|x_0=x,\mu)
\right],
\end{equation}

where the expectation $\mathbb{E}_\mu$ is with respect to the behaviour policy $\mu$ which has generated the trajectory $(x_t)_{t\ge0}$, i.e., $x_0 = x, x_{t+1} \sim P(\cdot|x_t, a_t), a_t \sim \mu(\cdot|x_t)$. Similar to \citep{espeholt2018impala}, we consider the infinite-horizon operator but very similar results hold for the n-step truncated operator.

Let 
$$\text{IS}(x_t) = \frac{\pi(a_t|x_t)}{\mu(a_t|x_t)},$$
be importance sampling weights, let 
$$\rho_t = \min(\bar \rho,\text{IS}(x_t)), \enspace c_t =\min(\bar c,\text{IS}(x_t)),$$
be truncated importance sampling weights with $\bar \rho \ge \bar c$, and let 
$$\tilde \rho_t =\alpha_\rho \rho_t + (1-\alpha_\rho)\text{IS}(x_t), \enspace \tilde c_t = \alpha_c c_t + (1-\alpha_c)\text{IS}(x_t) $$
be the Leaky importance sampling weights with leaky coefficients $\alpha_\rho \ge \alpha_c.$

\begin{theorem}[Restatement of \cref{thm:leaky}]
     Assume that there exists $\beta \in (0, 1]$ such that $\mathbb{E}_\mu \rho_0 \ge \beta$. Then the operator $\tilde{\mathcal{R}}$ defined by \cref{eq:leaky_Bellman} has a unique fixed point $\tilde V ^{\tilde \pi}$, which is the value function of the policy $ \pi_{\bar \rho, \alpha_\rho}$ defined by
$$
     \pi_{\bar \rho, \alpha_\rho} = \frac{\alpha_\rho\min \left(\bar \rho \mu(a|x), \pi(a|x)\right) +(1-\alpha_\rho)\pi(a|x)}{\sum_b \alpha_\rho \min \left(\bar \rho \mu(b|x), \pi(b|x)\right)+(1-\alpha_\rho)\pi(b|x)},
$$
    Furthermore, $\tilde{\mathcal{R}}$ is a $\tilde \eta$-contraction mapping in sup-norm, with $$
    \tilde{\eta} = \gamma ^{-1} - (\gamma^{-1}-1)\mathbb{E}_\mu \left[ \sum_{t\ge 0} \gamma^t \left( \Pi _{i=0} ^{t-2}\tilde c_i \right) \tilde \rho_{t-1} \right] \le 1-(1-\gamma)(\alpha_\rho \beta + 1 - \alpha_\rho)
    <1, $$
    where $\tilde c_{-1} =1, \tilde \rho_{-1}=1$ and $\Pi_{s=0}^{t-2}c_s = 1$ for $t=0,1.$
\end{theorem}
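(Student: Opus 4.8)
The plan is to mirror the V-trace contraction proof of \citet{espeholt2018impala}, pushing every step through the ``leaky'' mixture weights. I would start by recording the two elementary facts that drive everything: since $\min(\bar c,\text{IS}(x_i))\le\text{IS}(x_i)$ and $\min(\bar\rho,\text{IS}(x_i))\le\text{IS}(x_i)$, the mixtures satisfy $\tilde c_i\le\text{IS}(x_i)$ and $\tilde\rho_i\le\text{IS}(x_i)$ pointwise; and by a change of measure $\mathbb{E}_\mu[\text{IS}(x_t)\mid x_t]=\sum_a\pi(a|x_t)=1$, hence $\mathbb{E}_\mu[\prod_{i\le t}\text{IS}(x_i)]=1$ for every $t$. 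Together these give $\mathbb{E}_\mu[(\prod_{i<t}\tilde c_i)\tilde\rho_t]\le1$, so the series in \cref{eq:leaky_Bellman} converges absolutely (with $\mathbb{E}_\mu[\sum_{t\ge0}\gamma^t(\prod_{i<t}\tilde c_i)\tilde\rho_t]\le\frac{1}{1-\gamma}$) and $\tilde{\mathcal{R}}$ maps bounded functions to bounded functions.

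For the contraction, apply $\tilde{\mathcal{R}}$ to two bounded value functions $V_1,V_2$ and subtract; writing $\Delta=V_1-V_2$ the reward terms cancel and $\tilde{\mathcal{R}}V_1(x)-\tilde{\mathcal{R}}V_2(x)=\mathbb{E}_\mu[\Delta(x)+\sum_{t\ge0}\gamma^t(\prod_{i<t}\tilde c_i)\tilde\rho_t(\gamma\Delta(x_{t+1})-\Delta(x_t))]$. Reindexing the $\Delta(x_{t+1})$ terms by $s=t+1$ and collecting coefficients, this telescopes to $\mathbb{E}_\mu[(1-\tilde\rho_0)\Delta(x_0)+\sum_{s\ge1}\gamma^s(\prod_{i<s-1}\tilde c_i)(\tilde\rho_{s-1}-\tilde c_{s-1}\tilde\rho_s)\Delta(x_s)]$. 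Next I would integrate out one action at a time (law of total expectation plus the Markov property): in the $s$-th term $\tilde\rho_s$ may be replaced by $\mathbb{E}_\mu[\tilde\rho_s\mid x_s]\le1$, the bound coming from $\mathbb{E}_\mu[\min(\bar\rho,\text{IS})\mid x_s]=\sum_a\min(\bar\rho\mu(a|x_s),\pi(a|x_s))\le1$ and $\mathbb{E}_\mu[\text{IS}\mid x_s]=1$. Using $\bar\rho\ge\bar c$ and $\alpha_\rho\ge\alpha_c$, one shows the resulting effective weights are non-negative, so the absolute value can be pulled inside: $|\tilde{\mathcal{R}}V_1(x)-\tilde{\mathcal{R}}V_2(x)|\le\|\Delta\|_\infty((1-\mathbb{E}_\mu\tilde\rho_0)+\sum_{s\ge1}\gamma^s\mathbb{E}_\mu[(\prod_{i<s-1}\tilde c_i)(\tilde\rho_{s-1}-\tilde c_{s-1}\tilde\rho_s)])$.

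Setting $b_s=\gamma^s(\prod_{i<s}\tilde c_i)\tilde\rho_s$, the $s$-th coefficient above equals $\gamma b_{s-1}-b_s$ and the leading one is $1-b_0$, so the bracket collapses to $1-(1-\gamma)\sum_{s\ge0}b_s$ (with $b_N\to0$ since $\sum_s\mathbb{E}_\mu b_s\le\frac{1}{1-\gamma}$). Hence $\tilde\eta=1-(1-\gamma)\mathbb{E}_\mu[\sum_{s\ge0}\gamma^s(\prod_{i<s}\tilde c_i)\tilde\rho_s]$, and the index shift $\sum_{t\ge0}\gamma^t(\prod_{i=0}^{t-2}\tilde c_i)\tilde\rho_{t-1}=1+\gamma\sum_{s\ge0}b_s$ rewrites this in the $\gamma^{-1}-(\gamma^{-1}-1)(\cdots)$ form of the statement. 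Keeping only the $s=0$ term gives $\tilde\eta\le1-(1-\gamma)\mathbb{E}_\mu\tilde\rho_0=1-(1-\gamma)(\alpha_\rho\mathbb{E}_\mu\rho_0+1-\alpha_\rho)\le1-(1-\gamma)(\alpha_\rho\beta+1-\alpha_\rho)<1$, using $\mathbb{E}_\mu\text{IS}(x_0)=1$, $\mathbb{E}_\mu\rho_0\ge\beta>0$ and $\alpha_\rho\in[0,1]$. By Banach's theorem $\tilde{\mathcal{R}}$ then has a unique fixed point; to identify it, plug $V=V^{\pi_{\bar\rho,\alpha_\rho}}$ into \cref{eq:leaky_Bellman}, condition the $t$-th summand on $x_t$, and use $\mu(a|x)\tilde\rho(x,a)=\alpha_\rho\min(\bar\rho\mu(a|x),\pi(a|x))+(1-\alpha_\rho)\pi(a|x)=Z(x)\pi_{\bar\rho,\alpha_\rho}(a|x)$, where $Z(x)$ is exactly the normaliser appearing in the definition of $\pi_{\bar\rho,\alpha_\rho}$. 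The inner conditional expectation then equals $Z(x_t)((\mathcal{T}^{\pi_{\bar\rho,\alpha_\rho}}V)(x_t)-V(x_t))$, which vanishes at the Bellman fixed point $V^{\pi_{\bar\rho,\alpha_\rho}}$; hence every summand is zero and $\tilde{\mathcal{R}}V^{\pi_{\bar\rho,\alpha_\rho}}=V^{\pi_{\bar\rho,\alpha_\rho}}$.

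The step I expect to be the main obstacle is the sign control in the contraction bound: showing that, after replacing $\tilde\rho_s$ by $\mathbb{E}_\mu[\tilde\rho_s\mid x_s]\le1$, the quantities $\tilde\rho_{s-1}-\tilde c_{s-1}\mathbb{E}_\mu[\tilde\rho_s\mid x_s]$ (and $1-\mathbb{E}_\mu\tilde\rho_0$) are non-negative, so that $\|\Delta\|_\infty$ can be factored out of the absolute value. For plain V-trace this reduces to the one-line fact $\min(\bar c,\text{IS})\le\min(\bar\rho,\text{IS})$ whenever $\bar c\le\bar\rho$; for leaky V-trace the analogous control must be extracted from the mixture structure together with $\alpha_\rho\ge\alpha_c$, and this is the only place the hypothesis $\alpha_\rho\ge\alpha_c$ is genuinely needed. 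Everything else — the two telescoping identities, the conditional-expectation bookkeeping, and the fixed-point verification — is routine once this sign control is in place.
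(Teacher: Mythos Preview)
Your proposal is essentially the paper's own proof: both mirror the original V-trace contraction argument, telescoping $\tilde{\mathcal{R}}V_1-\tilde{\mathcal{R}}V_2$ into the $(\tilde\rho_{t-1}-\tilde c_{t-1}\tilde\rho_t)$ form, establishing non-negativity of the expected coefficients, summing them via the same algebra to obtain $\tilde\eta$, and identifying the fixed point through the Bellman equation for $\pi_{\bar\rho,\alpha_\rho}$. The sign-control step you flag as the main obstacle is exactly what the paper handles by asserting the pointwise inequality $\tilde\rho_t\ge\tilde c_t$ from $\bar\rho\ge\bar c$ and $\alpha_\rho\ge\alpha_c$, which together with $\mathbb{E}_\mu\tilde\rho_t\le1$ yields $\mathbb{E}_\mu[\tilde\rho_{t-1}-\tilde c_{t-1}\tilde\rho_t]\ge\mathbb{E}_\mu[\tilde c_{t-1}(1-\tilde\rho_t)]\ge0$; your conditioning-based write-up is a slightly more explicit version of that same step.
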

\begin{proof}
The proof follows the proof of V-trace from \citep{espeholt2018impala} with adaptations for the leaky V-trace coefficients. We have that
$$
\tilde{\mathcal{R}}V_1(x) -  \tilde{\mathcal{R}}V_2(x) = \mathbb{E}_\mu \sum_{t\ge0} \gamma^t \left(\Pi_{s=0}^{t-2}c_s \right) \left[\tilde \rho_{t-1} - \tilde c_{t-1}\tilde \rho_t \right] \left(V_1(x_t)-V_2(x_t)\right).
$$

Denote by $\tilde \kappa _t = \rho_{t-1} - \tilde c_{t-1}\tilde \rho_t,$ and notice that
$$\mathbb{E}_{\mu} \tilde \rho_t = \alpha_\rho \mathbb{E}_{\mu} \rho_t + (1 - \alpha_\rho) \mathbb{E}_{\mu} \text{IS}(x_t)  \le 1,$$
since $\mathbb{E}_\mu\text{IS}(x_t)=1,$ and therefore, $\mathbb{E}_\mu\rho_t\le1.$ Furthermore, since $\bar \rho\ge \bar c$ and $\alpha_\rho \ge \alpha_c,$ we have that $\forall t, \tilde \rho _{t} \ge \tilde c_t.$ Thus, the coefficients $\tilde \kappa _t $ are non negative in expectation, since

$$
\mathbb{E}_{\mu} \tilde \kappa_t = \mathbb{E}_{\mu}\tilde \rho_{t-1} - \tilde c_{t-1}\tilde \rho_t \ge \mathbb{E}_{\mu}\tilde c_{t-1} (1-\tilde \rho _t) \ge 0.
$$
Thus, $V_1(x)- V_2(x)$ is a linear combination of the values $V_1 - V_2$ at the other states, weighted by non-negative coefficients whose sum is 

\begin{align}
     &\sum_{t\ge 0} \gamma^t \mathbb{E}_\mu \left( \Pi _{s=0} ^{t-2}\tilde c_s \right) \left[ \tilde \rho_{t-1} -\tilde c_{t-1} \tilde \rho_t\right]\nonumber \\
     = & \sum_{t\ge 0} \gamma^t \mathbb{E}_\mu \left( \Pi _{s=0} ^{t-2}\tilde c_s \right) \tilde \rho_{t-1} -  \sum_{t\ge 0} \gamma^t \mathbb{E}_\mu \left( \Pi _{s=0} ^{t-1}\tilde c_s \right)\tilde  \rho_t \nonumber \\
      = & \gamma^{-1} - (\gamma^{-1}-1)\sum_{t\ge 0} \gamma^t \mathbb{E}_\mu \left( \Pi _{s=0} ^{t-2}\tilde c_s \right) \tilde \rho_{t-1} \nonumber \\
      \le & \gamma^{-1} - (\gamma^{-1}-1)(1+\gamma \mathbb{E}_\mu \tilde \rho_0) \label{line:sum}\\
      = & 1 - (1-\gamma)\mathbb{E}_\mu \tilde \rho_0 \nonumber\\
      = & 1 - (1-\gamma)\mathbb{E}_\mu \left( \alpha_\rho \rho_0 + (1-\alpha_\rho)\text{IS}(x_0)\right) \nonumber\\
     \le & 1 - (1-\gamma) \left (\alpha_\rho \beta + 1-\alpha_\rho\right)<1 ,\label{eq:beta1}
 \end{align}
where \cref{line:sum} holds since we expanded only the first two elements in the sum, and all the elements in this sum are positive, and \cref{eq:beta1} holds by the assumption.

We deduce that $\|\tilde{\mathcal{R}}V_1(x) - \tilde{\mathcal{R}}V_2(x)\|_\infty \le \tilde \eta \| V_1(x)-V_2(x)\|_\infty,$  with $\tilde \eta = 1 - (1-\gamma) \left (\alpha_\rho \beta + 1-\alpha_\rho\right) <1,$ so $\tilde{\mathcal{R}}$ is a contraction mapping. Furthermore, we can see that the parameter $\alpha_\rho$ controls the contraction rate, for $\alpha_\rho=1 $ we get the contraction rate of V-trace $\tilde \eta = 1-(1-\gamma)\beta$ and as $\alpha_\rho$ gets smaller with get better contraction as with $\alpha_\rho=0$ we get that $\tilde \eta = \gamma.$

Thus $\tilde{\mathcal{R}}$ possesses a unique fixed point. Let us now prove that this fixed point is $V^{\pi_{\bar \rho, \alpha_\rho}},$ where 
\begin{equation}
     \pi_{\bar \rho, \alpha_\rho} = \frac{\alpha_\rho\min \left(\bar \rho \mu(a|x), \pi(a|x)\right) +(1-\alpha_\rho)\pi(a|x)}{\sum_b \alpha_\rho \min \left(\bar \rho \mu(b|x), \pi(b|x)\right)+(1-\alpha_\rho)\pi(b|x)},
\end{equation}  
is a policy that mixes the target policy with the V-trace policy. 

We have: 
\begin{align*}
& \mathbb{E}_\mu \left[ \tilde \rho_t (r_t + \gamma V^{\pi_{\bar \rho, \alpha_\rho}}(x_{t+1}) - V^{\pi_{\bar \rho, \alpha_\rho}}(x_t))|x_t\right] \\
= &  \mathbb{E}_\mu \left(\alpha_\rho \rho_t + (1-\alpha_\rho)\text{IS}(x_t)\right)(r_t + \gamma V^{\pi_{\bar \rho, \alpha_\rho}}(x_{t+1}) - V^{\pi_{\bar \rho, \alpha_\rho}}(x_t))  \\
= & \sum_a \mu(a|x_t) \left(\alpha_\rho \min\left(\bar \rho,\frac{\pi(a|x_t)}{\mu(a|x_t)}\right) + (1-\alpha_\rho)\frac{\pi(a|x_t)}{\mu(a|x_t)}\right)(r_t + \gamma V^{\pi_{\bar \rho, \alpha_\rho}}(x_{t+1}) - V^{\pi_{\bar \rho, \alpha_\rho}}(x_t))\\
= & \sum_a \left(\alpha_\rho \min\left(\bar \rho \mu(a|x_t),\pi(a|x_t)\right) + (1-\alpha_\rho)\pi(a|x_t)\right)(r_t + \gamma V^{\pi_{\bar \rho, \alpha_\rho}}(x_{t+1}) - V^{\pi_{\bar \rho, \alpha_\rho}}(x_t))\\
= & \sum_a \pi_{\bar \rho, \alpha_\rho}(a|x_t)(r_t + \gamma V^{\pi_{\bar \rho, \alpha_\rho}}(x_{t+1}) - V^{\pi_{\bar \rho, \alpha_\rho}}(x_t)) \cdot \sum_b \left(\alpha_\rho \min\left(\bar \rho \mu(b|x_t),\pi(b|x_t)\right) + (1-\alpha_\rho)\pi(b|x_t)\right) =0,
\end{align*}
where we get that the left side (up to the summation on $b$) of the last equality equals zero since this is the Bellman equation for $V^{\pi_{\bar \rho, \alpha_\rho}}.$ We deduce that $\tilde{\mathcal{R}}V^{\pi_{\bar \rho, \alpha_\rho}} = V^{\pi_{\bar \rho, \alpha_\rho}},$ thus, $V^{\pi_{\bar \rho, \alpha_\rho}}$ is the unique fixed point of $\tilde{\mathcal{R}}.$
\end{proof}

\clearpage

\section{Individual domain learning curves in DM control and RWRL} 
\label{sec:dm_control}
\begin{figure}[h]
    \centering
    \includegraphics[width=\linewidth]{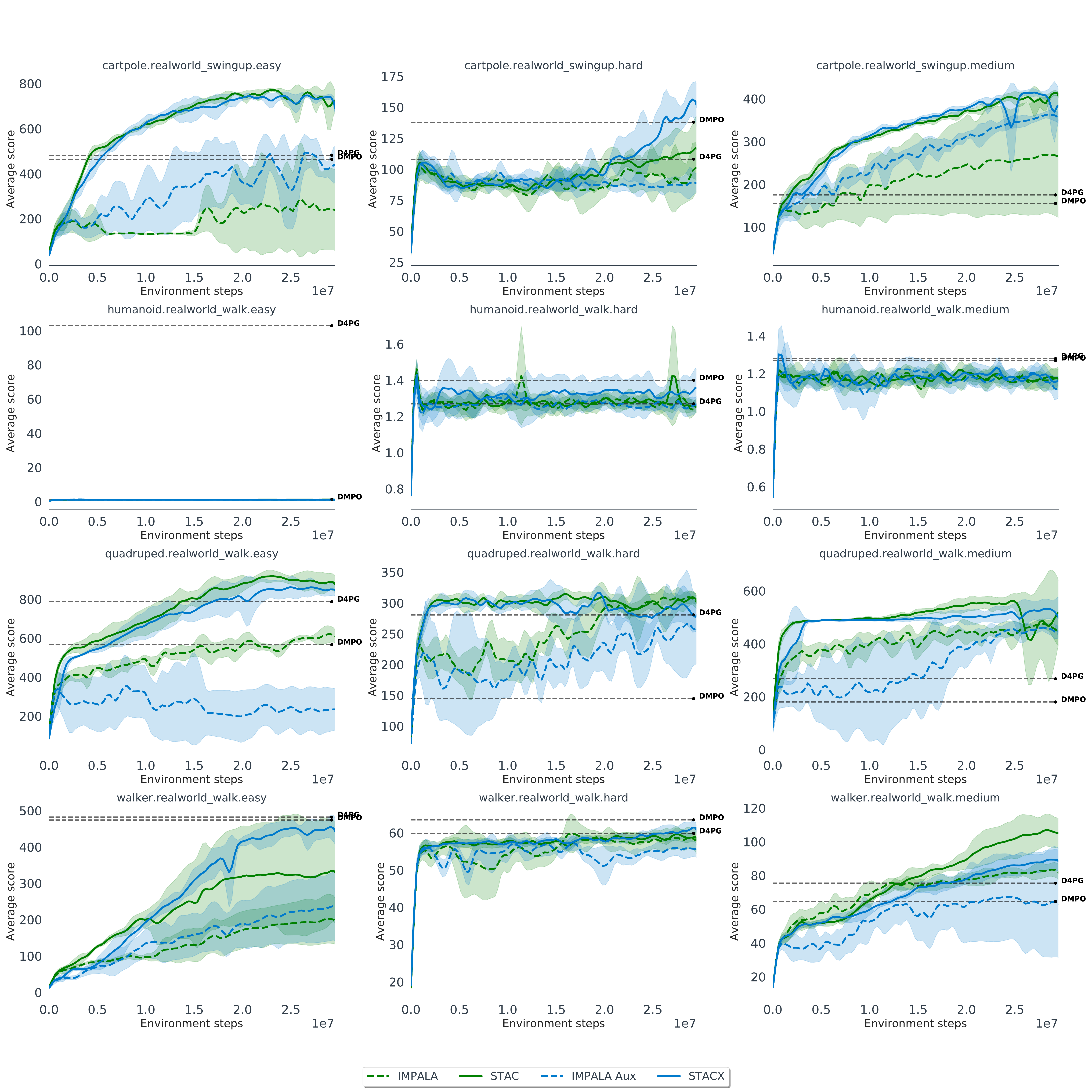}
\caption{Learning curves in specific domains of the RWRL challenge \citep{dulac2020empirical}. In each domain we report the mean, averaged over 3 seeds, of each method with std confidence intervals as shaded areas. In addition, we report the scores for two baselines (D4PG, DMPO) at the end of training, taken from \citep{dulac2020empirical}. }
\label{fig:rwrl_lvls}
\end{figure}

\begin{figure}[h]
    \centering
    \includegraphics[width=\linewidth]{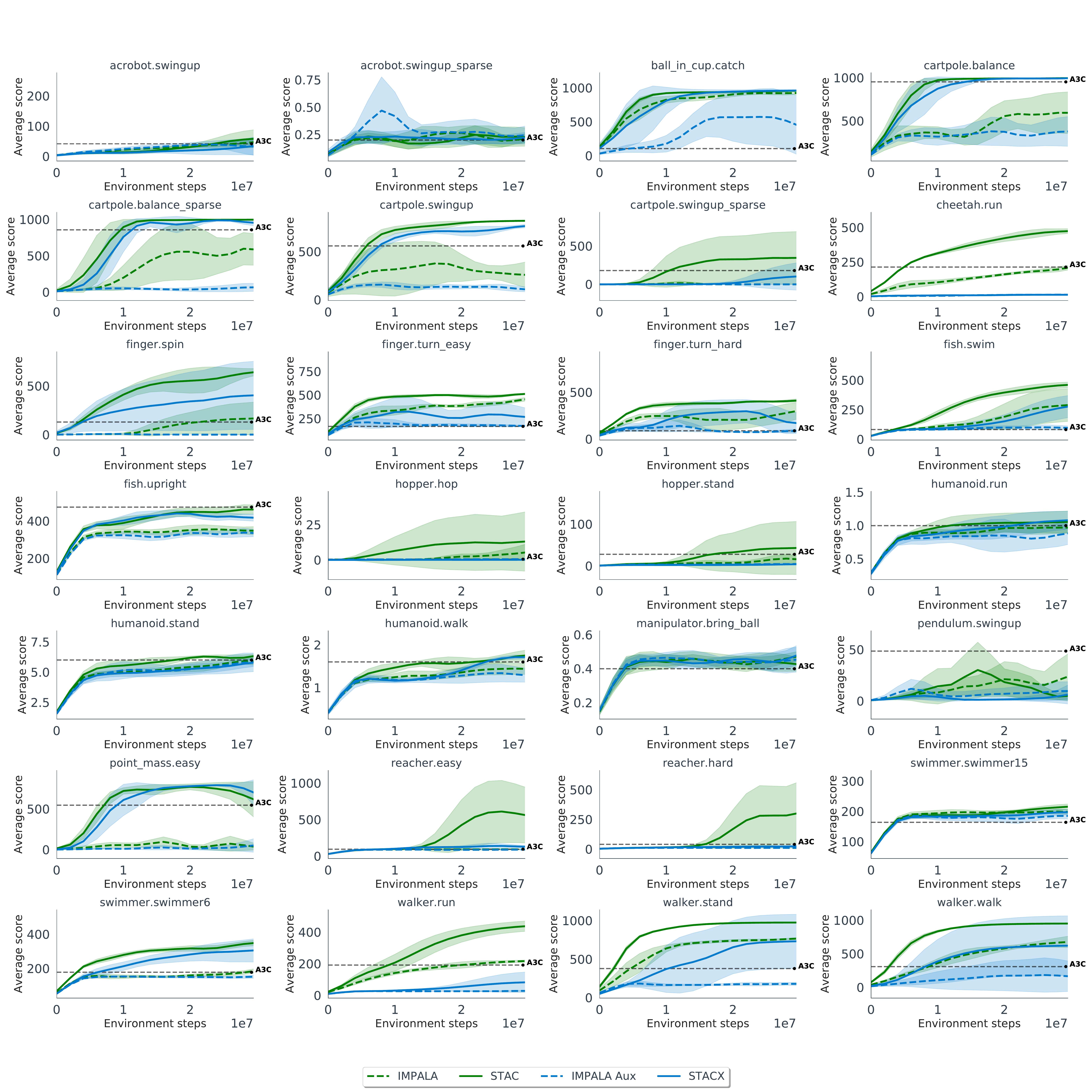}
\caption{Learning curves in specific domains of the DM control suite \citep{tassa2018deepmind} using the features. In each domain we report the mean, averaged over 3 seeds, of each method with std confidence intervals as shaded areas. In addition, we report the scores of the A3C baseline at the end of training, taken from \citep{tassa2018deepmind}. }
\label{fig:low_lvls}
\end{figure}

\begin{figure}[h]
    \centering
    \includegraphics[width=\linewidth]{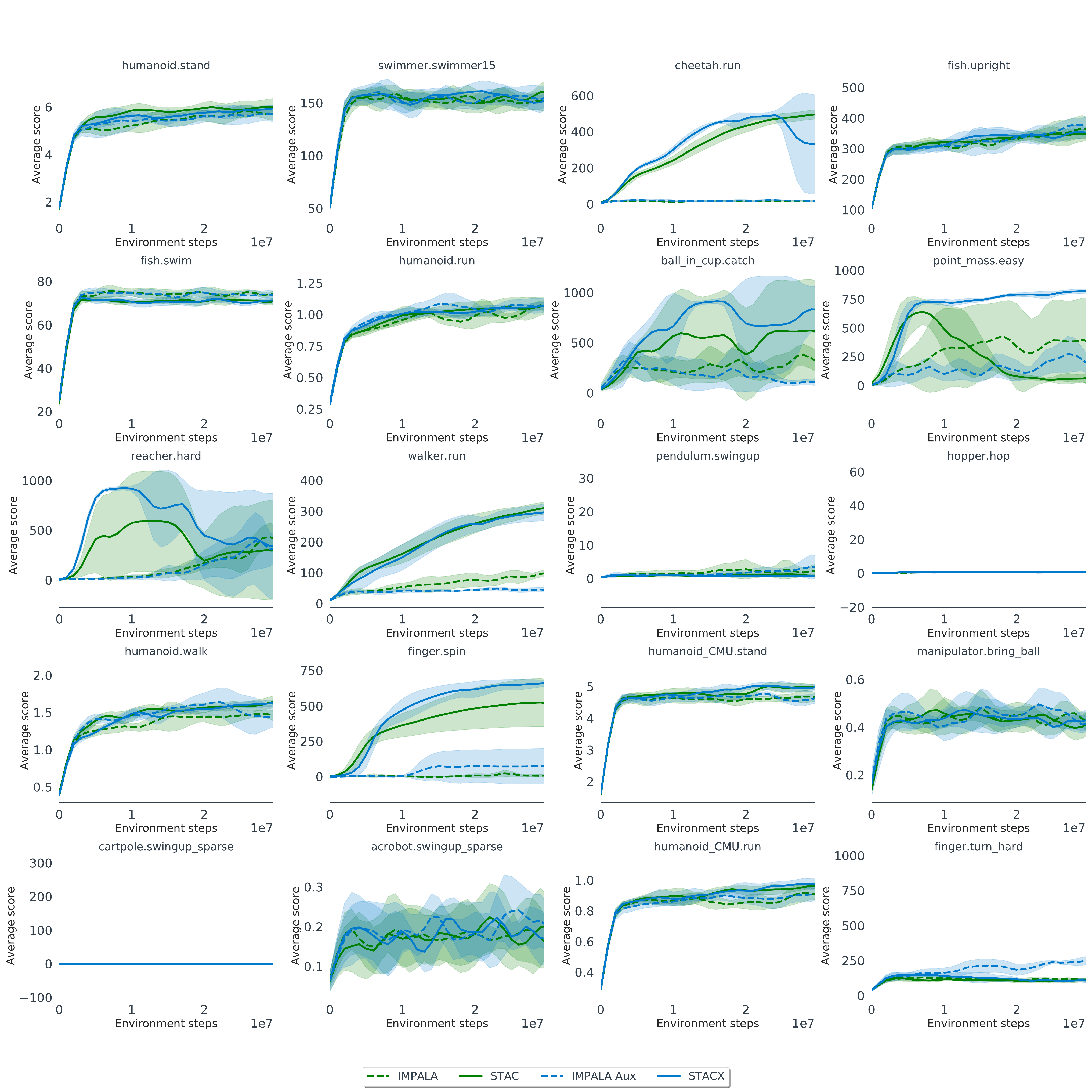}
\caption{Learning curves in specific domains of the DM control suite \citep{tassa2018deepmind} using pixels. In each domain we report the mean, averaged over 3 seeds, of each method with std confidence intervals as shaded areas.}
\label{fig:pixels_lvls}
\end{figure}

\clearpage

\section{Relative improvement in percents of STACX and STAC over IMPALA in Atari}
\label{sec:rel_atari}
\begin{figure}[h]
    \centering
    \includegraphics[width=0.6\linewidth]{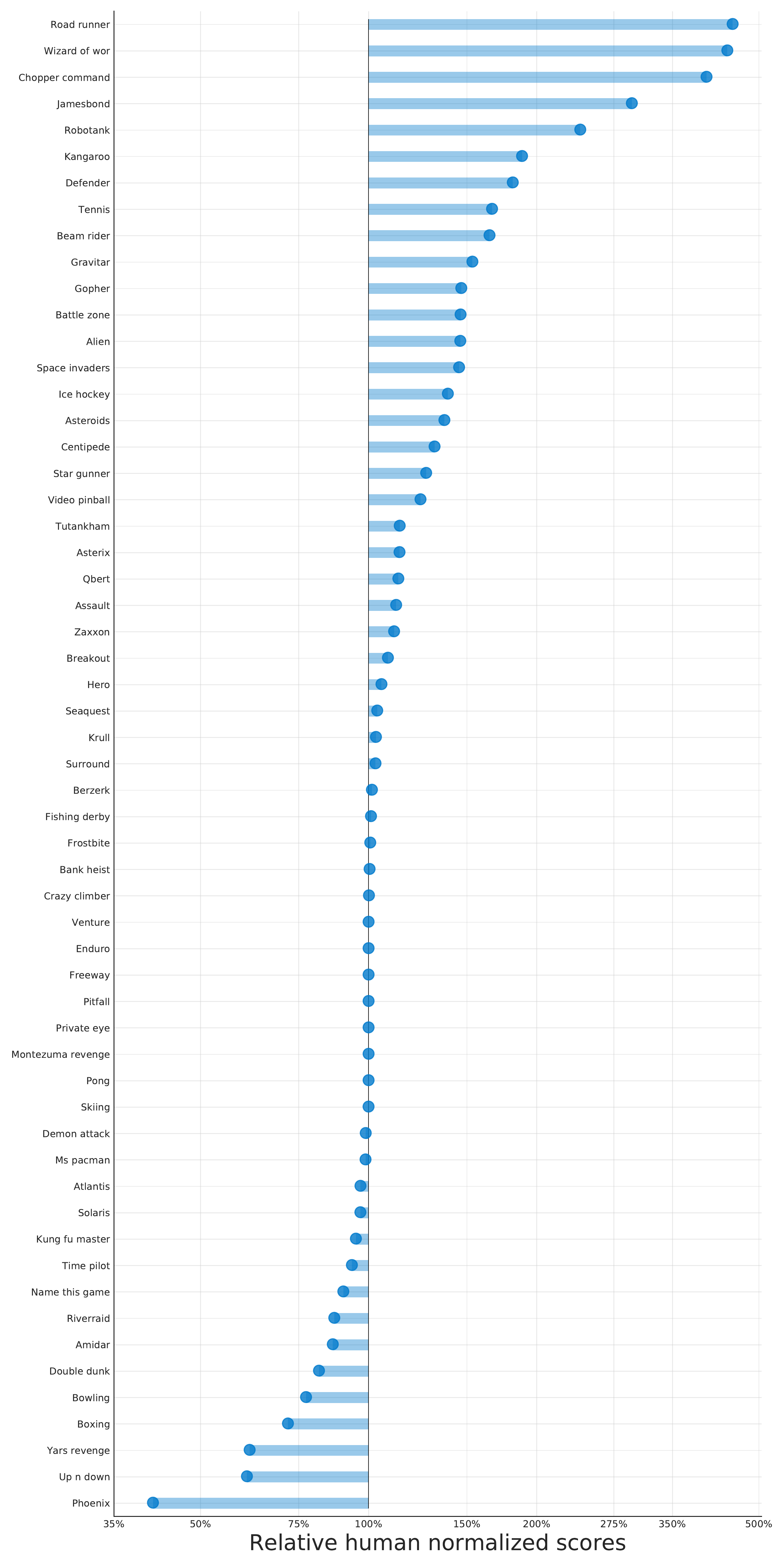}
\caption{Mean human-normalized scores after 200M frames, relative improvement in percents of STACX over IMPALA.}
\label{fig:relative}
\end{figure}

\begin{figure}[h]
    \centering
    \includegraphics[width=0.6\linewidth]{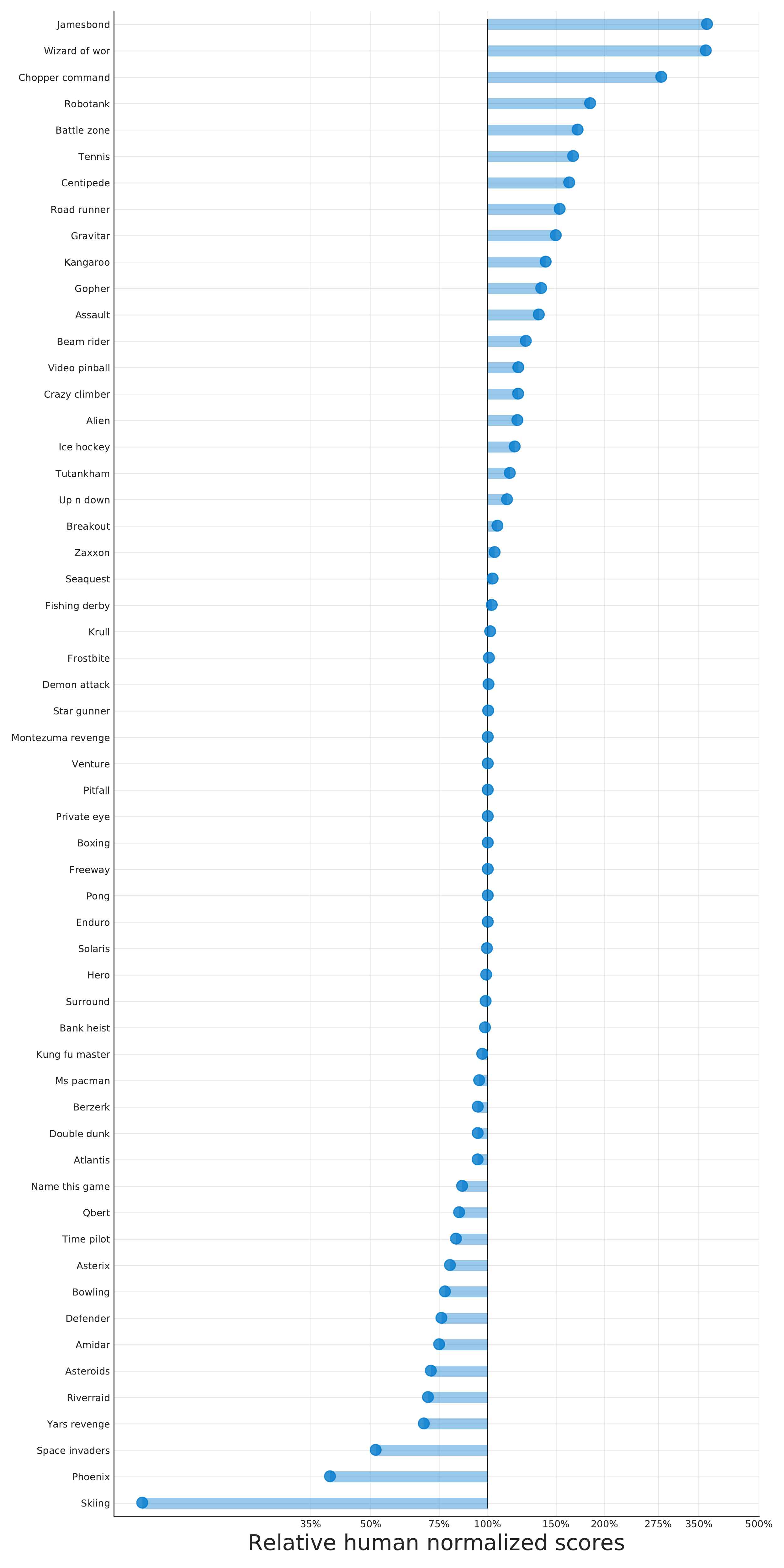}
\caption{Mean human-normalized scores after 200M frames, relative improvement in percents of STAC over the IMPALA baseline.}
\label{fig:relative2}
\end{figure}

\clearpage

\section{Individual game learning curves} 
\label{sec:lc_atari}

For clarity, we repeat a comment from the main paper regarding the values of the metaparameters. We present the values of the metaparameters used in the inner loss, i.e., after we apply a sigmoid activation. But to have a single scale for all the metaparameters ($\eta \in [0,1]$), we present the loss coefficients $g_e,g_v,g_p$ without scaling them by the respective value in the outer loss. For example, the value of the entropy weight $g_e$ is further multiplied by $g_e^{\text{outer}}=0.01$ when used in the inner loss.

\begin{figure}[h]
    \centering
    \includegraphics[width=\linewidth]{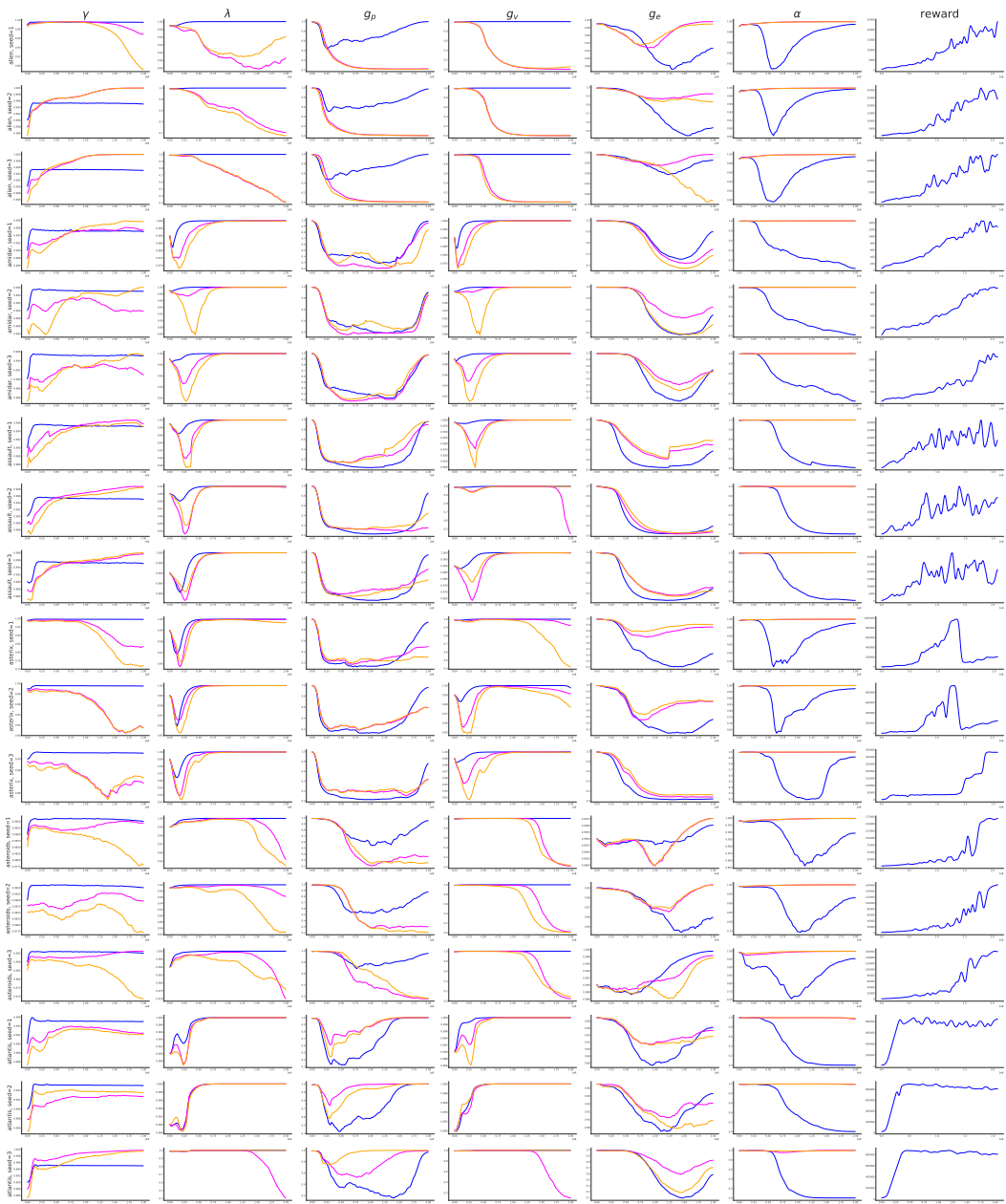}
\caption{Meta parameters and reward in each Atari game (and seed) during learning. Different colors correspond to different heads, blue is the main (policy) head.}
\end{figure}

\begin{figure}[h]
    \centering
    \includegraphics[width=\linewidth]{adaptivity_all_1.pdf}
\caption{Meta parameters and reward in each Atari game (and seed) during learning. Different colors correspond to different heads, blue is the main (policy) head.}
\end{figure}
\begin{figure}[h]
    \centering
    \includegraphics[width=\linewidth]{adaptivity_all_2.pdf}
\caption{Meta parameters and reward in each Atari game (and seed) during learning. Different colors correspond to different heads, blue is the main (policy) head.}
\end{figure}
\begin{figure}[h]
    \centering
    \includegraphics[width=\linewidth]{adaptivity_all_3.pdf}
\caption{Meta parameters and reward in each Atari game (and seed) during learning. Different colors correspond to different heads, blue is the main (policy) head.}
\end{figure}
\begin{figure}[h]
    \centering
    \includegraphics[width=\linewidth]{adaptivity_all_4.pdf}
\caption{Meta parameters and reward in each Atari game (and seed) during learning. Different colors correspond to different heads, blue is the main (policy) head.}
\end{figure}
\begin{figure}[h]
    \centering
    \includegraphics[width=\linewidth]{adaptivity_all_5.pdf}
\caption{Meta parameters and reward in each Atari game (and seed) during learning. Different colors correspond to different heads, blue is the main (policy) head.}
\end{figure}
\begin{figure}[h]
    \centering
    \includegraphics[width=\linewidth]{adaptivity_all_6.pdf}
\caption{Meta parameters and reward in each Atari game (and seed) during learning. Different colors correspond to different heads, blue is the main (policy) head.}
\end{figure}
\begin{figure}[h]
    \centering
    \includegraphics[width=\linewidth]{adaptivity_all_7.pdf}
\caption{Meta parameters and reward in each Atari game (and seed) during learning. Different colors correspond to different heads, blue is the main (policy) head.}
\end{figure}
\begin{figure}[h]
    \centering
    \includegraphics[width=\linewidth]{adaptivity_all_8.pdf}
\caption{Meta parameters and reward in each Atari game (and seed) during learning. Different colors correspond to different heads, blue is the main (policy) head.}
\end{figure}
\begin{figure}[h]
    \centering
    \includegraphics[width=\linewidth]{adaptivity_all_9.pdf}
\caption{Meta parameters and reward in each Atari game (and seed) during learning. Different colors correspond to different heads, blue is the main (policy) head.}
\end{figure}

\end{document}